\newenvironment{remark*}[2][Remark]{\par\bgroup{\scshape #1\ #2. }\it\ignorespaces}{\egroup}
\newcommand{\BALD}{\begin{aligned}}
\newcommand{\EALD}{\end{aligned}}
\newcommand{\BALDS}{\begin{aligned*}}
\newcommand{\EALDS}{\end{aligned*}}
\newcommand{\BCAS}{\begin{cases}}
\newcommand{\ECAS}{\end{cases}}
\newcommand{\BEAS}{\begin{eqnarray*}}
\newcommand{\EEAS}{\end{eqnarray*}}
\newcommand{\BEQ}{\begin{equation}}
\newcommand{\EEQ}{\end{equation}}
\newcommand{\BIT}{\begin{itemize}}
\newcommand{\EIT}{\end{itemize}}
\newcommand{\BMAT}{\begin{bmatrix}}
\newcommand{\EMAT}{\end{bmatrix}}
\newcommand{\BNUM}{\begin{enumerate}}
\newcommand{\ENUM}{\end{enumerate}}
\newcommand{\eg}{{\it e.g.}}
\newcommand{\ie}{{\it i.e.}}
\newcommand{\cf}{{\it cf.}}
\newcommand{\BA}{\begin{array}}
\newcommand{\EA}{\end{array}}
\newcommand{\reals}{\mathbf{R}}
\DeclareMathOperator*{\argmin}{\arg\min}
\DeclareMathOperator*{\minimize}{minimize}
\newcommand{\Prob}{\mathop{\mathbf{Pr}}}
\DeclareMathOperator{\sign}{sign}
\DeclareMathOperator{\trace}{tr}
\newcommand{\pc}{\hspace{1pc}}
\newcommand{\abs}[1]{\left| #1 \right|}
\newcommand{\norm}[1]{\left\| #1 \right\|}
\DeclareMathOperator{\logdet}{\log\det}
\DeclareMathOperator{\dom}{dom}
\DeclareMathOperator{\prox}{prox}
\title{Proximal Newton-type methods for minimizing composite functions}
\author{Jason D. Lee\footnotemark[1]\ \footnotemark[2]
  \and  Yuekai Sun\footnotemark[1]\ \footnotemark[2]
  \and  Michael A. Saunders\footnotemark[3]}
\begin{document}

\maketitle

\renewcommand{\thefootnote}{\fnsymbol{footnote}}

\footnotetext[1]{J. Lee and Y. Sun contributed equally to this work.}
\footnotetext[2]{Institute for Computational and Mathematical
  Engineering, Stanford University, Stanford, CA (\href{mailto:jdl17@stanford.edu}{\texttt{jdl17@stanford.edu}}, \href{mailto:yuekai@stanford.edu}{\texttt{yuekai@stanford.edu}}).}
\footnotetext[3]{Department of Management Science and Engineering,
  Stanford University, Stanford, CA (\href{mailto:saunders@stanford.edu}{\texttt{saunders@stanford.edu}}).}
\footnotetext[4]{Revised \today. A preliminary version of this work appeared in \cite{lee2012proximal}.}

\renewcommand{\thefootnote}{\arabic{footnote}}

\begin{abstract}
  We generalize Newton-type methods for minimizing smooth functions to
  handle a sum of two convex functions: a smooth function and a
  nonsmooth function with a simple proximal mapping. We show that the
  resulting proximal Newton-type methods inherit the desirable
  convergence behavior of Newton-type methods for minimizing smooth
  functions, even when search directions are computed inexactly. Many
  popular methods tailored to problems arising in bioinformatics,
  signal processing, and statistical learning are special cases of
  proximal Newton-type methods, and our analysis yields new
  convergence results for some of these methods.
\end{abstract}


\begin{keywords} 
  convex optimization, nonsmooth optimization, proximal mapping
\end{keywords}

\begin{AMS}
  65K05, 90C25, 90C53
\end{AMS}

\pagestyle{myheadings}
\thispagestyle{plain}
\markboth{J. LEE, Y. SUN, AND M. SAUNDERS}{PROXIMAL NEWTON-TYPE METHODS}

\section{Introduction}
\label{sec:introduction}

Many problems of relevance in bioinformatics, signal processing, and
statistical learning can be formulated as minimizing a \emph{composite
  function}:
\begin{align}
  \minimize_{x \in \reals^n} \,f(x) := g (x) + h(x),
  \label{eq:composite-form}
\end{align}
where $g$ is a convex, continuously differentiable loss function, and
$h$ is a convex but not necessarily differentiable penalty function or
regularizer. Such problems include the \emph{lasso}
\cite{tibshirani1996regression}, the \emph{graphical lasso}
\cite{friedman2008sparse}, and trace-norm matrix completion
\cite{candes2009exact}.

We describe a family of Newton-type methods for minimizing composite
functions that achieve superlinear rates of convergence subject to
standard assumptions. The methods can be interpreted as
generalizations of the classic proximal gradient method that account
for the curvature of the function when selecting a search
direction. Many popular methods for minimizing composite functions are
special cases of these \emph{proximal Newton-type methods}, and our
analysis yields new convergence results for some of these methods.

In section \ref{sec:introduction} we review state-of-the-art methods
for problem \eqref{eq:composite-form} and related work on projected
Newton-type methods for constrained optimization.  In sections
\ref{sec:proximal-newton-type-methods} and
\ref{sec:convergence-results} we describe proximal Newton-type
methods and their convergence behavior, and in section
\ref{sec:experiments} we discuss some applications of these methods
and evaluate their performance.

\textbf{Notation:} The methods we consider are \emph{line search
  methods}, which produce a sequence of points $\{x_k\}$ according to
\[
  x_{k+1} = x_k + t_k\Delta x_k,
\]
where $t_k$ is a \emph{step length} and $\Delta x_k$ is a \emph{search
  direction}. When we focus on one iteration of an algorithm, we drop
the subscripts (\eg, $x_+ = x + t\Delta x$). All the methods we
consider compute search directions by minimizing local models of the
composite function $f$. We use an accent $\hat{\cdot}$ to denote these
local models (\eg, $\hat{f}_k$ is a local model of $f$ at the $k$-th
step).

\subsection{First-order methods}

The most popular methods for minimizing composite functions are
\emph{first-order methods} that use \emph{proximal mappings} to handle
the nonsmooth part $h$. SpaRSA \cite{wright2009sparse} is a popular
\emph{spectral projected gradient} method that uses a \emph{spectral
  step length} together with a \emph{nonmonotone line search} to
improve convergence. TRIP \cite{kim2010scalable} also uses a spectral
step length but selects search directions using a trust-region
strategy.

We can accelerate the convergence of first-order methods using ideas
due to Nesterov \cite{nesterov2003introductory}. This yields
\emph{accelerated first-order methods}, which achieve
$\epsilon$-suboptimality within $O(1/\sqrt{\epsilon})$ iterations
\cite{tseng2008accelerated}. The most popular method in this family is
the Fast Iterative Shrinkage-Thresholding Algorithm (FISTA)
\cite{beck2009fast}. These methods have been implemented in the
package TFOCS \cite{becker2011templates} and used to solve problems
that commonly arise in statistics, signal processing, and statistical
learning.

\subsection{Newton-type methods}

There are two classes of methods that generalize Newton-type methods
for minimizing smooth functions to handle composite functions
\eqref{eq:composite-form}.  \emph{Nonsmooth Newton-type methods}
\cite{yu2010quasi} successively minimize a local quadratic model of
the composite function $f$:
\[
  \hat{f}_k(y) = f(x_k) + \sup_{z\in\partial f(x_k)} z^T(y-x_k) +
  \frac{1}{2}(y-x_k)^TH_k(y-x_k),
\]
where $H_k$ accounts for the curvature of $f$.  (Although computing
this $\Delta x_k$ is generally not practical, we can exploit the
special structure of $f$ in many statistical learning problems.)
\emph{Proximal Newton-type methods} approximate only the smooth part
$g$ with a local quadratic model:
\[
  \hat{f}_k(y) = g(x_k) + \nabla g(x_k)^T(y-x_k) + \frac{1}{2}(y-x_k)^TH_k(y-x_k) + h(y),
\]
where $H_k$ is an approximation to $\nabla^2 g(x_k)$.  This idea
can be traced back to the \emph{generalized proximal point method} of
Fukushima and Min\'{e} \cite{fukushima1981generalized}. 

Proximal Newton-type methods are a special case of cost approximation
(Patriksson \cite{patriksson1998cost}).  In particular, Theorem 4.1
(convergence under Rule E) and Theorem 4.6 (linear convergence) of
\cite{patriksson1998cost} apply to proximal Newton-type
methods. Patriksson shows superlinear convergence of the exact
proximal Newton method, but does not analyze the quasi-Newton
approximation, nor consider the adaptive stopping criterion of Section
\ref{sec:convergence-inexact-proxnewton}. By focusing on Newton-type
methods, we obtain stronger and more practical results.

Many popular methods for minimizing composite functions are special
cases of proximal Newton-type methods. Methods tailored to a specific
problem include \texttt{glmnet} \cite{friedman2007pathwise}, \texttt{newglmnet}
\cite{yuan2012improved}, QUIC \cite{hsieh2011sparse}, and the
Newton-LASSO method \cite{olsen2012newton}. Generic methods include
\emph{projected Newton-type methods} \cite{schmidt2009optimizing,
  schmidt2011projected}, proximal quasi-Newton methods
\cite{schmidt2010graphical, becker2012quasi}, and the method of Tseng
and Yun \cite{tseng2009coordinate, lu2011augmented}.

This article is the full version of our preliminary work
\cite{lee2012proximal}, and section \ref{sec:convergence-results}
includes a convergence analysis of inexact proximal Newton-type
methods (\ie, when the subproblems are solve inexactly). Our main
convergence results are:
\BNUM
\item The proximal Newton and proximal quasi-Newton methods (with line search) converge superlinearly.
\item The inexact proximal Newton method (with unit step length) converges locally at a linear or superlinear rate depending on the forcing sequence.
\ENUM
We also
describe an adaptive stopping condition to decide how exactly (or
inexactly) to solve the subproblem, and we demonstrate the benefits
empirically.

There is a rich literature on generalized equations, such as monotone
inclusions and variational inequalities. Minimizing composite
functions is a special case of solving generalized equations, and
proximal Newton-type methods are special cases of Newton-type methods
for solving them \cite{patriksson1998cost}. We refer to Patriksson
\cite{patriksson1999nonlinear} for a unified treatment of descent
methods for solving a large class of generalized equations. 

\section{Proximal Newton-type methods}
\label{sec:proximal-newton-type-methods}

In problem \eqref{eq:composite-form} we assume $g$ and $h$ are closed,
convex functions, with $g$ continuously differentiable and its
gradient $\nabla g$ Lipschitz continuous. The function $h$ is not
necessarily differentiable everywhere, but its \emph{proximal mapping}
\eqref{eq:prox-mapping} can be evaluated efficiently. We refer to $g$
as ``the smooth part'' and $h$ as ``the nonsmooth part''. We assume
the optimal value is attained at some optimal solution $x^\star$, not
necessarily unique.

\subsection{The proximal gradient method}

The proximal mapping of a convex function $h$ at $x$ is 
\begin{equation}
  \prox_{h}(x) := \argmin_{y\in\reals^n}\,h(y) + \frac{1}{2}\norm{y-x}^2.
  \label{eq:prox-mapping}
\end{equation}
Proximal mappings can be interpreted as generalized projections
because if $h$ is the indicator function of a convex set,
$\prox_h(x)$ is the projection of $x$ onto the set. If $h$ is the
$\ell_1$ norm and $t$ is a step-length, then $\prox_{th}(x)$ is the
\emph{soft-threshold operation}:
\[
  \prox_{t\ell_1}(x) = \sign(x)\cdot\max\{\abs{x}-t,0\},
\]
where $\sign$ and $\max$ are entry-wise, and $\cdot$ denotes the
entry-wise product.

The \emph{proximal gradient method} uses the proximal mapping of the
nonsmooth part to minimize composite functions.  For some step length
$t_k$, the next iterate is 
$x_{k+1} = \prox_{t_kh}\left(x_k-t_k\nabla g(x_k)\right)$.
This is equivalent to
\begin{gather}
   x_{k+1} = x_k - t_kG_{t_kf}(x_k)
\\ G_{t_kf}(x_k) := \frac{1}{t_k}\left(x_k-\prox_{t_kh}(x_k-t_k\nabla g(x_k)) \right),
\label{eq:composite-gradient-step}
\end{gather}
where $G_{t_kf}(x_k)$ is a
\emph{composite gradient step}. Most first-order methods, including
SpaRSA and accelerated first-order methods, are variants of this
simple method. We note three properties of the composite gradient~step: 
\begin{enumerate}
\item Let $\hat{g}$ be a simple quadratic model of $g$ near $x_k$
  (with $H_k$ a multiple of $I$):
  \[
    \hat{g}_k(y) := g(x_k) + \nabla g(x_k)^T(y-x_k) + \frac{1}{2t_k}
    \norm{y - x_k}^2 .
  \]
  The composite gradient step moves to the minimum of $\hat{g}_k + h$:
\begin{align}
   x_{k+1} &= \prox_{t_kh}\left(x_k-t_k\nabla g(x_k)\right)
\\        &= \argmin_y\,t_kh(y) + \frac{1}{2}\norm{y-x_k+t_k\nabla g(x_k)}^2
\\        &= \argmin_y\,\nabla g(x_k)^T(y-x_k) + \frac{1}{2t_k}\norm{y-x_k}^2 + h(y).
			 \label{eq:simple-quadratic}
\end{align}

\item The composite gradient step is neither a gradient nor a subgradient of $f$
  at any point; rather it is the sum of an explicit gradient (at $x$) and an
  implicit subgradient (at $\prox_h(x)$). The first-order optimality conditions of    	 
  \eqref{eq:simple-quadratic} are
  \[
    \partial h(x_{k+1}) + \frac{1}{t_k}(x_{k+1} - x_k) = 0.
  \]
  We express $x_{k+1} - x_k$ in terms of $G_{t_kf}$ and rearrange to obtain
  \[
    G_{t_kf}(x_k) \in \nabla g(x_k) + \partial h(x_{k+1}).
  \]

\item The composite gradient step is zero if and only if $x$ minimizes
  $f$, \ie{} $G_f(x) = 0$ generalizes the usual (zero gradient)
  optimality condition to composite functions (where $G_f(x) =
  G_{tf}(x)$ when $t=1$).
\end{enumerate}
We shall use the length of $G_f(x)$ to measure the optimality of a
point $x$.
We show that $G_f$ inherits the Lipschitz continuity of $\nabla g$.

\begin{definition}
\label{asu:lipschitz-continuous}
A function $F$ is Lipschitz continuous with constant $L$ if
\begin{equation}
\norm{F(x) - F(y)} \le L\norm{x - y}\text{ for any }x,y.
\label{eq:lipschitz-continuity}
\end{equation}
\end{definition}

\begin{lemma}
  \label{lem:G-lipschitz}
  If $\nabla g$ is Lipschitz continuous with constant $L_1$, then
  \[
	\norm{G_f(x)} \le (L_1 + 1)\norm{x - x^\star}.
  \]
\end{lemma}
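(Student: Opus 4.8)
The plan is to write $G_f(x) = x - \prox_h\!\bigl(x - \nabla g(x)\bigr)$, exploit the fact that $x^\star$ is a fixed point of the map $x \mapsto \prox_h(x - \nabla g(x))$ so that $G_f(x^\star) = 0$, and set $d = x - x^\star$, $e = \nabla g(x) - \nabla g(x^\star)$. The claim reduces to showing $\norm{G_f(x)} \le \norm{d - e}$, because then Lipschitz continuity of $\nabla g$ and the triangle inequality give $\norm{d - e} \le \norm{d} + \norm{e} \le (1 + L_1)\norm{x - x^\star}$.

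To obtain $\norm{G_f(x)} \le \norm{d - e}$, plain nonexpansiveness of $\prox_h$ is not enough (it only delivers the weaker constant $2 + L_1$); the key is that $\prox_h$ is \emph{firmly} nonexpansive, i.e. $\norm{\prox_h(u) - \prox_h(v)}^2 + \norm{(I - \prox_h)(u) - (I - \prox_h)(v)}^2 \le \norm{u - v}^2$. I would apply this with $u = x - \nabla g(x)$ and $v = x^\star - \nabla g(x^\star)$. Then $\prox_h(u) = x - G_f(x)$, $\prox_h(v) = x^\star$, $(I - \prox_h)(u) = G_f(x) - \nabla g(x)$, and $(I - \prox_h)(v) = -\nabla g(x^\star)$, so the firm-nonexpansiveness inequality becomes $\norm{d - G_f(x)}^2 + \norm{G_f(x) - e}^2 \le \norm{d - e}^2$. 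Expanding the three squares and cancelling the common $\norm{d}^2$ and $\norm{e}^2$ terms leaves $\norm{G_f(x)}^2 \le \langle G_f(x), d + e\rangle - \langle d, e\rangle$.

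The final ingredient is convexity of $g$: monotonicity of $\nabla g$ gives $\langle d, e\rangle = \langle x - x^\star, \nabla g(x) - \nabla g(x^\star)\rangle \ge 0$, hence $\norm{G_f(x)}^2 \le \langle G_f(x), d + e\rangle \le \norm{G_f(x)}\,\norm{d + e}$ by Cauchy--Schwarz, so $\norm{G_f(x)} \le \norm{d + e} \le (1 + L_1)\norm{x - x^\star}$ (the case $G_f(x) = 0$ being trivial). Alternatively, and closer to the optimality-condition computations already used in the excerpt, one can replace the appeal to firm nonexpansiveness by monotonicity of $\partial h$: the second property of the composite gradient step gives $G_f(x) - \nabla g(x) \in \partial h(x - G_f(x))$, while optimality of $x^\star$ gives $-\nabla g(x^\star) \in \partial h(x^\star)$, and pairing these two inclusions through monotonicity of $\partial h$ yields exactly the same inequality $\norm{G_f(x)}^2 \le \langle G_f(x), d + e\rangle - \langle d, e\rangle$.

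I expect the only real subtlety to be recognizing that one must use the \emph{firm} (not merely plain) nonexpansiveness of $\prox_h$ — equivalently, monotonicity of $\partial h$ together with monotonicity of $\nabla g$ — which is precisely what produces the sharp constant $1 + L_1$ rather than $2 + L_1$; everything else is routine expansion and the triangle inequality.
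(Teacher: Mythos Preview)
Your proof is correct and essentially identical to the paper's: the paper uses precisely your ``alternative'' route via monotonicity of $\partial h$ (together with monotonicity of $\nabla g$ to drop the cross term $\langle d,e\rangle$), arriving at the same inequality $\norm{G_f(x)}^2 \le \langle G_f(x), d+e\rangle$ and then Cauchy--Schwarz plus Lipschitz. Your firm-nonexpansiveness framing is an equivalent repackaging of the same argument; the only cosmetic slip is that you announce you will show $\norm{G_f(x)}\le\norm{d-e}$ but in fact establish $\norm{G_f(x)}\le\norm{d+e}$ --- harmless, since either bounds by $(1+L_1)\norm{x-x^\star}$.
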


\begin{proof}
  The composite gradient steps at $x$ and the optimal solution $x^\star$ satisfy
  \begin{align*}
	G_f(x) &\in \nabla g(x) + \partial h(x - G_f(x)), \\
	G_f(x^\star) &\in \nabla g(x^\star) + \partial h(x^\star).
  \end{align*}
  We subtract these two expressions and rearrange to obtain
  \[
    \partial h(x - G_f(x)) - \partial h(x^\star) \ni G_f(x) - (\nabla g(x) - \nabla g(x^\star)).
  \]
  Since $h$ is convex, $\partial h$ is monotone and 
  \begin{align*}
    0 &\le (x - G_f(x) - x^\star)^T\partial h(x - G_f(x_k))
  \\  &= -G_f(x)^TG_f(x) + (x - x^\star)^TG_f(x) + G_f(x)^T(\nabla g(x) - \nabla g(x^\star))
  \\  &\pc + (x - x^\star)^T(\nabla g(x) - \nabla g(x^\star)).
  \end{align*}
  We drop the last term because it is nonnegative ($\nabla g$ is monotone) to obtain
  \begin{align*}
     0 &\le -\norm{G_f(x)}^2 + (x - x^\star)^TG_f(x) + G_f(x)^T(\nabla g(x) - \nabla g(x^\star))
  \\   &\le -\norm{G_f(x)}^2 + \norm{G_f(x)}(\norm{x - x^\star} + \norm{\nabla g(x) - \nabla g(x^\star)}),
  \end{align*}
  so that
  \begin{equation}
	\norm{G_f(x)} \le \norm{x - x^\star} + \norm{\nabla g(x) - \nabla g(x^\star)}.
	\label{eq:G-lipschitz-1}
  \end{equation}
  Since $\nabla g$ is Lipschitz continuous, we have
  \[
	\norm{G_f(x)} \le (L_1 + 1)\norm{x - x^\star}.
  \]
\end{proof}

\subsection{Proximal Newton-type methods}

Proximal Newton-type methods use a symmetric positive definite
matrix $H_k \approx \nabla^2 g(x_k)$ to model the curvature of $g$:
\[
  \hat{g}_k(y) = g(x_k) + \nabla g(x_k)^T(y-x_k) + \frac{1}{2}(y-x_k)^TH_k(y-x_k).
\]
A proximal Newton-type search direction $\Delta x_k$ solves the subproblem
\begin{equation}
   \Delta x_k = \argmin_d\,\hat{f}_k(x_k + d) := \hat{g}_k(x_k + d) + h(x_k+d).
  \label{eq:proxnewton-search-dir-1}
\end{equation}
There are many strategies for choosing $H_k$. If we choose $H_k =
\nabla^2 g(x_k)$, we obtain the \emph{proximal Newton method}. If
we build an approximation to $\nabla^2 g(x_k)$ according to a
quasi-Newton strategy, we obtain a \emph{proximal quasi-Newton
  method}. If the problem is large, we can use limited memory
quasi-Newton updates to reduce memory usage. Generally speaking, most
strategies for choosing Hessian approximations in Newton-type methods
(for minimizing smooth functions) can be adapted to choosing $H_k$ in
proximal Newton-type methods.

When $H_k$ is not positive definite, we can also adapt strategies for
handling indefinite Hessian approximations in Newton-type methods. The
most simple strategy is Hessian modification: we add a multiple of the
identity to $H_k$ when $H_k$ is indefinite. This makes the subproblem
strongly convex and damps the search direction. In a proximal
quasi-Newton method, we can also do update skipping: if an update
causes $H_k$ to become indefinite, simply skip the update.

We can also express the proximal Newton-type search direction using
\emph{scaled proximal mappings}. This lets us interpret a proximal
Newton-type search direction as a ``composite Newton step'' and
reveals a connection with the composite gradient step.

\begin{definition}
  \label{def:scaled-prox}
  Let $h$ be a convex function and $H$ be a positive definite matrix. Then
  the scaled proximal mapping of $h$ at $x$ is
  \begin{align}
	\prox_h^H(x) := \argmin_{y\in\reals^n}\,h(y)+\frac{1}{2}\norm{y-x}_H^2.
	\label{eq:scaled-prox}
  \end{align}
\end{definition}

Scaled proximal mappings share many properties with (unscaled)
proximal mappings:
\begin{enumerate}
\item The scaled proximal point $\prox_h^H(x)$ exists and is unique for $x\in\dom h$ because the
  proximity function is strongly convex if $H$ is positive definite.

\item Let $\partial h(x)$ be the subdifferential of $h$ at $x$. Then
  $\prox_h^H(x)$ satisfies
  \begin{align}
	H\left(x-\prox_h^H(x)\right)\in\partial h\left(\prox_h^H(x)\right).
	\label{eq:scaled-prox-1}
  \end{align}

\item Scaled proximal mappings are \emph{firmly nonexpansive} in the
  $H$-norm. That is, if $u = \prox_h^H(x)$ and $v = \prox_h^H(y)$,
  then
  \[
	 (u-v)^TH(x-y)\ge \norm{u-v}_H^2,
  \]
  and the Cauchy-Schwarz inequality implies
  $\norm{u-v}_H \le \norm{x-y}_H$.
\end{enumerate}

We can express proximal Newton-type search directions as
``composite Newton steps'' using scaled proximal mappings:
\begin{align}
  \Delta x = \prox_h^{H}\left(x-H^{-1}\nabla g(x)\right) - x.
  \label{eq:proxnewton-search-direction-scaled-prox}
\end{align}
We use \eqref{eq:scaled-prox-1} to deduce that proximal Newton
search directions satisfy
\[
  H\left(H^{-1}\nabla g(x) - \Delta x\right) \in \partial h(x + \Delta x).
\]
We simplify to obtain
\begin{align}
  H\Delta x \in -\nabla g(x) - \partial h(x + \Delta x).
  \label{eq:proxnewton-search-direction-2}
\end{align}
Thus proximal Newton-type search directions, like composite gradient
steps, combine an explicit gradient with an implicit subgradient. This
expression reduces to the Newton system when $h=0$.

\begin{proposition}[Search direction properties]
  \label{prop:search-direction-properties}
  If $H$ is positive definite, then $\Delta x$ in \eqref{eq:proxnewton-search-dir-1}
  satisfies
  \begin{gather}
	f(x_+) \le f(x)+t\left(\nabla g(x)^T\Delta x+h(x+\Delta x)-h(x)\right)+ O(t^2),
	\label{eq:search-direction-properties-1}
	\\ \nabla g(x)^T\Delta x+h(x+\Delta x)-h(x) \le -\Delta x^TH\Delta x. 
	\label{eq:search-direction-properties-2}
  \end{gather}
\end{proposition}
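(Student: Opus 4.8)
The plan is to prove the two inequalities separately; both follow from elementary convexity arguments combined with the optimality characterization \eqref{eq:proxnewton-search-direction-2} of $\Delta x$.

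For \eqref{eq:search-direction-properties-1}, I would split $f(x_+) = g(x + t\Delta x) + h(x + t\Delta x)$ and bound each term. Since $\nabla g$ is Lipschitz continuous with constant $L_1$, a first-order Taylor expansion gives $g(x + t\Delta x) \le g(x) + t\nabla g(x)^T\Delta x + \tfrac{L_1}{2}t^2\norm{\Delta x}^2 = g(x) + t\nabla g(x)^T\Delta x + O(t^2)$. For the nonsmooth term, convexity of $h$ together with the identity $x + t\Delta x = (1-t)x + t(x+\Delta x)$ gives $h(x + t\Delta x) \le (1-t)h(x) + t\,h(x+\Delta x)$ for $t\in[0,1]$, that is, $h(x+t\Delta x) \le h(x) + t\bigl(h(x+\Delta x) - h(x)\bigr)$. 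Adding the two bounds and collecting the terms proportional to $t$ yields \eqref{eq:search-direction-properties-1}.

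For \eqref{eq:search-direction-properties-2}, I would start from \eqref{eq:proxnewton-search-direction-2}, which says $-\nabla g(x) - H\Delta x \in \partial h(x + \Delta x)$. Applying the subgradient inequality for the convex function $h$ at the point $y = x$ gives $h(x) \ge h(x + \Delta x) + \bigl(-\nabla g(x) - H\Delta x\bigr)^T\bigl(x - (x + \Delta x)\bigr) = h(x + \Delta x) + \nabla g(x)^T\Delta x + \Delta x^T H\Delta x$. Rearranging this is exactly \eqref{eq:search-direction-properties-2}.

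There is no serious obstacle; the only point that needs a word of care is the $O(t^2)$ term in the first inequality, where one should note that $\Delta x$ is held fixed (independent of $t$), so $\tfrac{L_1}{2}t^2\norm{\Delta x}^2$ is genuinely $O(t^2)$ as $t\to 0$. Positive definiteness of $H$ is used only implicitly, to guarantee that $\Delta x$ in \eqref{eq:proxnewton-search-dir-1} is well defined (the subproblem is then strongly convex, so \eqref{eq:proxnewton-search-direction-2} holds) and that $-\Delta x^T H\Delta x < 0$ unless $\Delta x = 0$; combined with \eqref{eq:search-direction-properties-1} this is what makes $\Delta x$ a descent direction.
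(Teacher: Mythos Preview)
Your proof is correct. For \eqref{eq:search-direction-properties-1} you take essentially the same route as the paper: bound $h(x+t\Delta x)$ by convexity and expand $g$ to first order, with the only cosmetic difference that you invoke the Lipschitz constant $L_1$ explicitly where the paper writes $O(t^2)$ directly.

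For \eqref{eq:search-direction-properties-2} your argument differs from the paper's. You use the optimality condition \eqref{eq:proxnewton-search-direction-2} and a single application of the subgradient inequality for $h$ at $x+\Delta x$ evaluated at $x$. The paper instead compares $\hat f$ at $\Delta x$ to $\hat f$ at $t\Delta x$ (using that $\Delta x$ is the minimizer), bounds $h(x+t\Delta x)$ by convexity, divides through by $1-t$, and then lets $t\to 1$. Your approach is more direct and avoids the limit; the paper's approach, on the other hand, never needs to name a subgradient explicitly and works purely from the variational characterization of $\Delta x$. Either way the same inequality drops out in one step.
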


\begin{proof}
  For $t\in(0,1]$, 
  \begin{align*}
	f(x_+) -f(x) &= g(x_+)-g(x)+h(x_+)-h(x)
  \\ &\le g(x_+)-g(x)+th(x+\Delta x)+(1-t)h(x)-h(x)
  \\ &= g(x_+)-g(x)+t(h(x+\Delta x)-h(x))
  \\ &= \nabla g(x)^T(t\Delta x)+t(h(x+\Delta x)-h(x))+O(t^2),
  \end{align*}
  which proves \eqref{eq:search-direction-properties-1}.

  Since $\Delta x$ steps to the minimizer of $\hat{f}$
  \eqref{eq:proxnewton-search-dir-1}, $t\Delta x$ satisfies
  \begin{align*}
	 &\nabla g(x)^T\Delta x+\frac{1}{2}\Delta x^TH\Delta x+h(x+\Delta x)
  \\ &\pc\le \nabla g(x)^T(t\Delta x)+\frac12 t^2 \Delta x^TH\Delta x+h(x_+)
  \\ &\pc\le t\nabla g(x)^T\Delta x+\frac12 t^2 \Delta x^TH\Delta x+th(x+\Delta x)+(1-t)h(x).
  \end{align*}
  We rearrange and then simplify:
  \begin{gather*}
	 (1-t)\nabla g(x)^T\Delta x + \frac{1}{2}(1-t^2)\Delta x^TH\Delta x
								+ (1-t)(h(x+\Delta x)-h(x)) \le 0
  \\ \nabla g(x)^T\Delta x+\frac{1}{2}(1+t)\Delta x^TH\Delta x+h(x+\Delta x)-h(x) \le 0
  \\ \nabla g(x)^T\Delta x+h(x+\Delta x)-h(x) \le -\frac{1}{2}(1+t)\Delta x^TH\Delta x.
  \end{gather*} 
  Finally, we let $t\to 1$ and rearrange to obtain
  \eqref{eq:search-direction-properties-2}.
\end{proof}

Proposition \ref{prop:search-direction-properties} implies the search
direction is a descent direction for $f$ because we can substitute
\eqref{eq:search-direction-properties-2} into
\eqref{eq:search-direction-properties-1} to obtain
\begin{align}
  f(x_+) \le f(x)-t\Delta x^TH\Delta x+O(t^2).
  \label{eq:descent}
\end{align}

\begin{proposition}
  \label{prop:first-order-conditions}
  Suppose $H$ is positive definite. Then $x^\star$ is an optimal
  solution if and only if at $x^\star$ the search direction $\Delta x$
  \eqref{eq:proxnewton-search-dir-1} is zero.
\end{proposition}

\begin{proof}
  If $\Delta x$ at $x^\star$ is nonzero, then it is a descent direction for
  $f$ at $x^\star$. Clearly $x^\star$ cannot be a minimizer of $f$.
  
  If $\Delta x = 0$, then $x$ is the minimizer of $\hat{f}$, so that
  \[
	\nabla g(x)^T(td)+\frac12 t^2 d^THd+h(x+td) -h(x)\ge 0
  \]
  for all $t>0$ and $d$. We rearrange to obtain
  \begin{align}
	h(x+td)-h(x) \ge -t\nabla g(x)^Td-\frac12 t^2 d^THd. \label{eq:first-order-cond-1}
  \end{align}
  Let $Df(x,d)$ be the directional derivative of $f$ at $x$ in the direction $d$:
  \begin{align}
	Df(x,d) &= \lim_{t\to 0} \frac{f(x+td)-f(x)}{t} 
  \nonumber \\
  &= \lim_{t\to 0} \frac{g(x+td)-g(x)+h(x+td)-h(x)}{t} 
  \nonumber \\
  &= \lim_{t\to 0} \frac{t\nabla g(x)^Td+O(t^2)+h(x+td)-h(x)}{t}.
  \label{eq:first-order-cond-2}
  \end{align}
  We substitute \eqref{eq:first-order-cond-1} into
  \eqref{eq:first-order-cond-2} to obtain
  \begin{align*}
	 Df(x,u) &\ge \lim_{t\to 0}
				  \frac{t\nabla g(x)^Td+O(t^2) - \frac12 t^2 d^THd-t\nabla g(x)^Td}{t}
  \\         &=   \lim_{t\to 0} \frac{-\frac12 t^2 d^THd+O(t^2)}{t} = 0.
  \end{align*}
  Since $f$ is convex, $x$ is an optimal solution if and only if $\Delta x = 0$. 
\end{proof}

In a few special cases we can derive a closed-form expression for the
proximal Newton search direction, but usually we must resort to an
iterative method to solve the subproblem \eqref{eq:proxnewton-search-dir-1}.
The user should choose an iterative method that exploits the
properties of $h$.  For example, if $h$ is the $\ell_1$ norm, (block)
coordinate descent methods combined with an active set strategy are
known to be very efficient \cite{friedman2007pathwise}.

We use a line search procedure to select a step length $t$ that
satisfies a sufficient descent condition: the next iterate $x_+$ satisfies $f(x_+) \le f(x)+\alpha t\lambda$, where
\BEQ
   \lambda := \nabla g(x)^T\Delta x+h(x+\Delta x)-h(x).
   \label{eq:sufficient-descent}
\EEQ
The parameter $\alpha\in(0,0.5)$ can be interpreted as the fraction of the
decrease in $f$ predicted by linear extrapolation that we will accept.
A simple example is a
\emph{backtracking line search} \cite{boyd2004convex}: backtrack along
the search direction until a suitable step length is
selected. Although simple, this procedure performs admirably in
practice.

An alternative strategy is to search along the \emph{proximal arc},
\ie, the arc/curve
\begin{equation}
\Delta x_k(t) := \argmin_y\,\nabla g(x_k)^T(y-x_k) + \frac{1}{2t}(y-x_k)^TH_k(y-x_k) + h(y).
\label{eq:arc-search-subproblem}
\end{equation}
Arc search procedures have some benefits relative to line search
procedures. First, the arc search step is the optimal solution to a subproblem. Second, when the optimal solution lies on a low-dimensional
manifold of $\reals^n$, an arc search strategy is likely to identify
this manifold. The main drawback is the cost of obtaining trial
points: a subproblem must be solved at each trial point.

\begin{lemma}
  \label{lem:acceptable-step-lengths}
  Suppose $H\succeq mI$ for some $m > 0$ and $\nabla g$ is Lipschitz
  continuous with constant $L_1$. Then the sufficient descent
  condition \eqref{eq:sufficient-descent} is satisfied by
  \begin{align}
	t \le \min \left\{1,\frac{2m}{L_1}(1-\alpha)\right\}.
	\label{eq:step-length-conditions}
  \end{align}
\end{lemma}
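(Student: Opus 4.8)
The plan is to follow the classical Newton-type line-search argument, using the Lipschitz continuity of $\nabla g$ to control the second-order remainder in the smooth part $g$, and using Proposition~\ref{prop:search-direction-properties} to relate the linear model decrease $\lambda$ to the quadratic quantity $\Delta x^T H \Delta x$. First I would write $x_+ = x + t\Delta x$ and split $f(x_+) - f(x) = \big(g(x_+) - g(x)\big) + \big(h(x_+) - h(x)\big)$. For the smooth piece, I apply the standard descent inequality following from Lipschitz continuity of $\nabla g$ with constant $L_1$:
\[
  g(x + t\Delta x) \le g(x) + t\nabla g(x)^T\Delta x + \frac{L_1 t^2}{2}\norm{\Delta x}^2.
\]
For the nonsmooth piece, convexity of $h$ gives $h(x + t\Delta x) \le (1-t)h(x) + t\,h(x + \Delta x)$ for $t \in [0,1]$, so $h(x_+) - h(x) \le t\big(h(x+\Delta x) - h(x)\big)$.

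Combining these two bounds yields
\[
  f(x_+) - f(x) \le t\big(\nabla g(x)^T\Delta x + h(x+\Delta x) - h(x)\big) + \frac{L_1 t^2}{2}\norm{\Delta x}^2 = t\lambda + \frac{L_1 t^2}{2}\norm{\Delta x}^2,
\]
with $\lambda$ as in \eqref{eq:sufficient-descent}. Now I invoke \eqref{eq:search-direction-properties-2} from Proposition~\ref{prop:search-direction-properties}, which says $\lambda \le -\Delta x^T H\Delta x$; together with $H \succeq mI$ this gives both $\lambda \le -m\norm{\Delta x}^2$ and $\norm{\Delta x}^2 \le -\lambda/m$. Substituting the latter into the remainder term,
\[
  f(x_+) - f(x) \le t\lambda + \frac{L_1 t^2}{2}\cdot\frac{-\lambda}{m} = t\lambda\left(1 - \frac{L_1 t}{2m}\right).
\]

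To obtain the sufficient descent condition $f(x_+) - f(x) \le \alpha t\lambda$, it then suffices that $t\lambda\big(1 - \frac{L_1 t}{2m}\big) \le \alpha t \lambda$; since $\lambda < 0$ (the search direction is a genuine descent direction whenever $\Delta x \neq 0$, and if $\Delta x = 0$ the statement is vacuous by Proposition~\ref{prop:first-order-conditions}), dividing by $t\lambda$ reverses the inequality and we need $1 - \frac{L_1 t}{2m} \ge \alpha$, i.e. $t \le \frac{2m}{L_1}(1-\alpha)$. Intersecting with the constraint $t \le 1$ coming from the convexity bound on $h$ gives exactly \eqref{eq:step-length-conditions}. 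I expect the only real subtlety to be bookkeeping the sign of $\lambda$ correctly when dividing through the inequality, and making sure the $t\le 1$ restriction (needed for the convexity step on $h$) is carried through into the final $\min$; the rest is routine.
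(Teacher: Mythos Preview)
Your argument is correct and follows essentially the same route as the paper: both use the Lipschitz descent bound on $g$ plus convexity of $h$ to reach $f(x_+)-f(x)\le t\lambda + \tfrac{L_1 t^2}{2}\|\Delta x\|^2$, then invoke \eqref{eq:search-direction-properties-2} together with $H\succeq mI$ to replace $\|\Delta x\|^2$ by $-\lambda/m$ and read off the step-length condition. The only cosmetic difference is that the paper derives the descent bound via the integral remainder and substitutes $\tfrac{L_1 t}{2}\|\Delta x\|^2 \le -(1-\alpha)\lambda$ directly, whereas you factor as $t\lambda(1-\tfrac{L_1 t}{2m})$ and divide by $t\lambda$; the content is identical.
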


\begin{proof}
  We can bound the decrease at each iteration by
  \begin{align*}
	 & f(x_+) -f(x) = g(x_+) -g(x) +h(x_+) -h(x)
  \\ &\pc\le \int_0^1 \nabla g(x+s(t\Delta x))^T(t\Delta x) ds
		+ th(x+\Delta x) +(1-t)h(x) -h(x)
  \\ &\pc=\nabla g(x)^T(t\Delta x)+ t(h(x+\Delta x) -h(x))
       +\int_0^1 (\nabla g(x+s(t\Delta x))-\nabla g(x))^T(t\Delta x) ds
  \\ &\pc\le t\left(\nabla g(x)^T\Delta x+h(x+\Delta x) -h(x)
      + \int_0^1 \norm{\nabla g(x+s(\Delta x))-\nabla g(x)}\norm{\Delta x} ds\right).
\end{align*}  
Since $\nabla g$ is Lipschitz continuous with constant $L_1$,
\begin{align}  
  f(x_+) -f(x) &\le t\left(\nabla g(x)^T\Delta x+h(x+\Delta x)
						   - h(x)+\frac{L_1t}{2} \norm{\Delta x}^2\right) 
   \nonumber
\\             &= t\left(\lambda+\frac{L_1t}{2} \norm{\Delta x}^2\right).
   \label{eq:acceptable-step-lengths-1}
\end{align}
If we choose
$t\le \frac{2m}{L_1}(1-\alpha)$, then
\[
   \frac{L_1t}{2} \norm{\Delta x}^2 \le m(1-\alpha)\norm{\Delta x}^2
   \le (1-\alpha)\Delta x^T H\Delta x.
\]
By \eqref{eq:search-direction-properties-2}, we have
$\frac{L_1t}{2} \norm{\Delta x}^2 \le -(1-\alpha)\lambda$. We substitute this expression into \eqref{eq:acceptable-step-lengths-1} to obtain the desired expression:
$$
f(x_+) -f(x) \le t\left(\lambda-(1-\alpha)\lambda\right) = t(\alpha\lambda).
$$
\end{proof}

\begin{algorithm}
\caption{A generic proximal Newton-type method} 
\label{alg:prox-newton} 
\begin{algorithmic}[1]
\Require starting point $x_0\in\dom f$
\Repeat
\State Choose $H_k$, a positive definite approximation to the Hessian.
\State Solve the subproblem for a search direction:
\Statex \pc\pc $\Delta x_k \leftarrow \argmin_d \nabla g(x_k)^Td + \frac{1}{2}d^TH_kd +h(x_k+d).$
\State Select $t_k$ with a backtracking line search.
\State Update: $x_{k+1} \leftarrow x_k + t_k\Delta x_k$.
\Until{stopping conditions are satisfied.}
\end{algorithmic}
\end{algorithm}

\subsection{Inexact proximal Newton-type methods}
\label{sec:inexact-proxnewton}

Inexact proximal Newton-type methods solve the subproblems
\eqref{eq:proxnewton-search-dir-1} approximately to obtain inexact
search directions. These methods can be more efficient than their
exact counterparts because they require less computation per
iteration. Indeed, many practical implementations of proximal
Newton-type methods such as \texttt{\texttt{glmnet}}, \texttt{newGLMNET}, and
QUIC solve the subproblems inexactly.
In practice, how inexactly we solve the subproblem is
critical to the efficiency and reliability of the method. The
practical implementations just mentioned
use a variety of heuristics to decide. 
Although these methods perform admirably in practice,
there are few results on how the inexact subproblem solutions
affect their convergence behavior.

We now describe an adaptive stopping condition for the subproblem.  In
section \ref{sec:convergence-results} we analyze the convergence
behavior of inexact Newton-type methods, and in
section~\ref{sec:experiments} we conduct computational experiments to
compare the performance of our stopping condition against commonly
used heuristics.

Our adaptive stopping condition follows the one used by
\emph{inexact Newton-type methods} for minimizing smooth functions:
\begin{align}
  \norm{\nabla \hat{g}_k(x_k + \Delta x_k)} \le \eta_k\norm{\nabla g(x_k)},
  \label{eq:inexact-newton-stopping-condition}
\end{align}
where $\eta_k$ is a \emph{forcing term} that requires the left-hand side
to be small.  We generalize 
the condition to composite functions by substituting composite gradients into
\eqref{eq:inexact-newton-stopping-condition}: if $H_k \preceq MI$ for
some $M > 0$, we require
\begin{align}
  \|G_{\hat{f}_k/M}(x_k + \Delta x_k)\|
  \le \eta_k\norm{G_{f/M}(x_k)}.
  \label{eq:adaptive-stopping-condition}
\end{align}
We set $\eta_k$ based on how well $\hat{G}_{k-1}$ approximates $G$
near $x_k$: if $mI\preceq H_k$ for some $m > 0$, we require
\begin{align}
  \eta_k =\min\,\left\{\frac{m}{2},
    \frac{\|G_{\hat{f}_{k-1}/M}(x_k)-G_{f/M}(x_k)\|}{\norm{G_{f/M}(x_{k-1})}} \right\}.
  \label{eq:forcing-term}
\end{align}
This choice due to Eisenstat and Walker \cite{eisenstat1996choosing}
yields desirable convergence results and performs admirably in
practice.

Intuitively, we should solve the subproblem exactly if (i) $x_k$ is
close to the optimal solution, and (ii) $\hat{f}_k$ is a good model of
$f$ near $x_k$. If (i), we seek to preserve the fast local convergence
behavior of proximal Newton-type methods; if (ii), then minimizing
$\hat{f}_k$ is a good surrogate for minimizing $f$. In these cases,
\eqref{eq:adaptive-stopping-condition} and \eqref{eq:forcing-term}
are small, so the subproblem is solved accurately.

We can derive an expression like
\eqref{eq:proxnewton-search-direction-2} for an inexact search
direction in terms of an explicit gradient, an implicit subgradient,
and a residual term $r_k$. This reveals connections to the inexact
Newton search direction in the case of smooth problems. The adaptive
stopping condition \eqref{eq:adaptive-stopping-condition} is equivalent to
\begin{align*}
  0&\in G_{\hat{f}_k}(x_k + \Delta x_k) + r_k
\\ &= \nabla \hat{g}_k(x_k + \Delta x_k) +\partial h(x_k
      + \Delta x_k + G_{\hat{f}_k}(x_k + \Delta x_k)) + r_k
\\ &= \nabla g(x_k) + H_k\Delta x_k +\partial h(x_k
      + \Delta x_k + G_{\hat{f}_k}(x_k + \Delta x_k)) + r_k
\end{align*}
for some $r_k$ such that $\norm{r_k} \le \eta_k\norm{G_f(x_k)}$. Thus
an inexact search direction satisfies
\begin{align}
  H_k\Delta x_k \in -\nabla g(x_k) - \partial h(x_k
  + \Delta x_k + G_{\hat{f}_k}(x_k + \Delta x_k)) + r_k.
  \label{eq:inexact-proxnewton-search-direction}
\end{align}

Recently, Byrd et al.\ \cite{byrd2013inexact} analyze the inexact proximal Newton method with a more stringent adaptive stopping condition 
\BEQ
\|G_{\hat{f}_k/M}(x_k + \Delta x_k)\| \le \eta_k\norm{G_{f/M}(x_k)}\text{ and }\hat{f}_k(x_k+\Delta x_k) - \hat{f}_k(x_k) \le \beta\lambda_k
\label{eq:stringent-adaptive-stopping-condition}
\EEQ
for some $\beta\in(0,\frac12)$. The second condition is a sufficient descent condition on the subproblem. When $h$ is the $\ell_1$ norm, they show the inexact proximal Newton method with the stopping criterion \eqref{eq:stringent-adaptive-stopping-condition}
\BNUM
\item converges globally
\item eventually accepts the unit step length  
\item converges linearly or superlinearly depending on the forcing terms.
\ENUM 
Although the first two results generalize readily to composite functions with a generic $h$, the third result depends on the separability of the $\ell_1$
norm, and do not apply to generic composite functions. Since most practical implementations such as \cite{hsieh2011sparse} and \cite{yuan2012improved} more closely correspond to \eqref{eq:adaptive-stopping-condition}, we state our results for the adaptive stopping condition that does not impose sufficient descent. However our local convergence result combined with their first two results, imply the inexact proximal Newton method with stopping condition \eqref{eq:stringent-adaptive-stopping-condition} globally converges, and converges linearly or superlinearly (depending on the forcing term) for a generic $h$.

\section{Convergence of proximal Newton-type methods}
\label{sec:convergence-results}

We now analyze the convergence behavior of proximal Newton-type
methods.  In section \ref{sec:global-convergence} we show that
proximal Newton-type methods converge globally when the subproblems
are solved exactly.  In sections \ref{sec:convergence-proxnewton}
and \ref{sec:convergence-prox-quasinewton} we show that proximal
Newton-type methods and proximal quasi-Newton methods converge
$q$-quadratically and $q$-superlinearly subject to standard
assumptions on the smooth part $g$. In section
\ref{sec:convergence-inexact-proxnewton}, we show that the inexact
proximal Newton method converges
\begin{itemize}
\item $q$-linearly when the forcing terms $\eta_k$ are uniformly
  smaller than the inverse of the Lipschitz constant of $G_f$;
\item $q$-superlinearly when the forcing terms $\eta_k$ are chosen
  according to \eqref{eq:forcing-term}.
\end{itemize}

\subsection*{Notation} $G_f$ is the composite
gradient step on the composite function $f$, and $\lambda_k$ is the
decrease in $f$ predicted by linear extrapolation on $g$ at $x_k$
along the search direction $\Delta x_k$:
\[
  \lambda_k := \nabla g(x_k)^T\Delta x_k+h(x_k+\Delta x_k)-h(x_k).
\]
$L_1$, $L_2$, and $L_{G_f}$ are the Lipschitz constants
of $\nabla g$, $\nabla^2 g$, and $G_f$ respectively, while $m$
and $M$ are the (uniform) strong convexity and smoothness constants
for the $\hat{g}_k$'s, \ie, $mI\preceq H_k \preceq MI$. If we set
$H_k = \nabla^2 g(x_k)$, then $m$ and $M$ are also the strong
convexity and strong smoothness constants of $g$.

\subsection{Global convergence}
\label{sec:global-convergence}
Our first result shows proximal Newton-type methods converge globally
to some optimal solution $x^\star$. There are many similar results; 
\eg, those in \cite[section 4]{patriksson1999nonlinear},
and Theorem \ref{thm:global-convergence} is neither the first nor the
most general. We include this result because the proof is simple and
intuitive.
We assume 
\begin{enumerate}
\item $f$ is a closed, convex function and $\inf_x\{f(x)\mid x\in\dom f\}$ is attained;
\item the $H_k$'s are (uniformly) positive definite; \ie, $H_k \succeq
mI$ for some $m > 0$.
\end{enumerate}
The second assumption ensures that the methods are executable,
\ie, there exist step lengths that satisfy the sufficient descent
condition (cf.\ Lemma \ref{lem:acceptable-step-lengths}).

\begin{theorem}
  \label{thm:global-convergence}
  Suppose $f$ is a closed convex function, and $\inf_x\{f(x)\mid
  x\in\dom f\}$ is attained at some $x^\star$. If $H_k \succeq mI$ for
  some $m > 0$ and the subproblems \eqref{eq:proxnewton-search-dir-1}
  are solved exactly, then $x_k$ converges to an optimal solution
  starting at any $x_0\in\dom f$.
\end{theorem}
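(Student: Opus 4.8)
The plan is to use the sufficient-decrease property of the line search to drive the search directions — equivalently, the composite gradient steps $G_f(x_k)$ — to zero, and then to identify any limit of the iterates as an optimal solution.

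First I would check that the iteration is well-defined with step lengths bounded away from zero. By Lemma~\ref{lem:acceptable-step-lengths}, every $t\in(0,\bar t\,]$ with $\bar t:=\min\{1,\frac{2m}{L_1}(1-\alpha)\}$ satisfies the sufficient descent condition~\eqref{eq:sufficient-descent} (here $L_1<\infty$ since $\nabla g$ is Lipschitz, a standing assumption, and $\bar t>0$ since $m>0$), so the backtracking line search terminates; moreover, if the unit step is rejected the first rejected trial value must exceed $\bar t$, so the accepted $t_k$ is bounded below by a fixed constant $t_{\min}>0$ (the backtracking reduction factor times $\bar t$). Combining~\eqref{eq:sufficient-descent}, \eqref{eq:search-direction-properties-2} of Proposition~\ref{prop:search-direction-properties}, and $H_k\succeq mI$ gives
\[
  f(x_{k+1})\le f(x_k)+\alpha t_k\lambda_k\le f(x_k)-\alpha t_{\min}m\norm{\Delta x_k}^2 .
\]
If $\Delta x_k=0$ for some $k$, Proposition~\ref{prop:first-order-conditions} already gives optimality, so assume $\Delta x_k\ne 0$ for all $k$. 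Since the infimum of $f$ is attained, $\{f(x_k)\}$ is nonincreasing and bounded below, hence convergent; telescoping the displayed inequality shows $\sum_k\norm{\Delta x_k}^2<\infty$ and (from $\alpha t_k(-\lambda_k)\le f(x_k)-f(x_{k+1})$ and $t_k\ge t_{\min}$) $\sum_k(-\lambda_k)<\infty$, so $\Delta x_k\to 0$ and $\lambda_k\to 0$.

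Next I would transfer this to the composite gradient step. Writing $G_f(x_k)\in\nabla g(x_k)+\partial h(x_k-G_f(x_k))$ and, from~\eqref{eq:proxnewton-search-direction-2}, $-H_k\Delta x_k-\nabla g(x_k)\in\partial h(x_k+\Delta x_k)$, subtracting these and using the monotonicity of $\partial h$ together with the uniform bounds $mI\preceq H_k\preceq MI$ yields
\[
  \norm{G_f(x_k)}^2\le -G_f(x_k)^T(I+H_k)\Delta x_k-\norm{\Delta x_k}_{H_k}^2\le(1+M)\norm{G_f(x_k)}\,\norm{\Delta x_k},
\]
so $\norm{G_f(x_k)}\le(1+M)\norm{\Delta x_k}\to 0$. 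Since $G_f$ is Lipschitz (its definition composes $\nabla g$ with the nonexpansive map $\prox_h$; cf.\ the proof of Lemma~\ref{lem:G-lipschitz}), any limit point $\bar x$ of $\{x_k\}$ satisfies $G_f(\bar x)=0$ and is therefore optimal, since the composite gradient step vanishes exactly at minimizers of $f$.

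The step I expect to be the main obstacle is promoting "$G_f(x_k)\to 0$" to convergence of the sequence $\{x_k\}$ itself: a priori $\{x_k\}$ need not even be bounded, and could accumulate at several optimal points, because we control only $\sum_k\norm{\Delta x_k}^2$, not $\sum_k\norm{\Delta x_k}$. I would handle this with a Fej\'er-type estimate: using that $x_k+\Delta x_k$ minimizes the strongly convex model $\hat f_k$~\eqref{eq:proxnewton-search-dir-1}, the convexity of $g$, and $mI\preceq H_k\preceq MI$, bound $\Delta x_k^T(x_k-x^\star)$ above for a fixed $x^\star\in\argmin f$ and substitute into the expansion of $\norm{x_{k+1}-x^\star}^2$; one then argues that $\{x_k\}$ is (quasi-)Fej\'er monotone with respect to the solution set and, having a limit point there, must converge. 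The delicate point is that the curvature mismatch between $g$ and its quadratic model tends to leave a multiplicative perturbation that is only $o(1)$ rather than summable, so the estimate must be organized carefully — exploiting $f(x_k)\downarrow f(x^\star)$ and $\sum_k\norm{\Delta x_k}^2<\infty$ — to keep the error terms controllable; that bookkeeping is where I expect the real work of the proof to lie.
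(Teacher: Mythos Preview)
Your first two paragraphs --- bounding $t_k$ away from zero via Lemma~\ref{lem:acceptable-step-lengths}, telescoping the sufficient-descent inequality, and using~\eqref{eq:search-direction-properties-2} together with $H_k\succeq mI$ to force $\lambda_k\to 0$ and hence $\Delta x_k\to 0$ --- are precisely the paper's argument. The paper then stops: invoking Proposition~\ref{prop:first-order-conditions} it simply writes that ``$x_k$ must converge to some optimal solution $x^\star$,'' with no discussion of boundedness of $\{x_k\}$, existence of limit points, or uniqueness of the limit. The step you flag as the main obstacle is therefore one the paper does not address; your proposal is actually more careful on this point than the published proof.

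Two remarks on the material you add beyond the paper. First, the inequality $\norm{G_f(x_k)}\le(1+M)\norm{\Delta x_k}$ relies on an upper bound $H_k\preceq MI$, which Theorem~\ref{thm:global-convergence} does not assume (only $H_k\succeq mI$ is hypothesized). Without such a bound even the weaker conclusion that limit points are optimal is not immediate, since passing to the limit in~\eqref{eq:proxnewton-search-direction-2} requires $H_k\Delta x_k\to 0$, not merely $\Delta x_k\to 0$. Second, the Fej\'er-type estimate you outline would, if carried out, yield a genuinely complete convergence proof --- but it lies entirely outside the paper's short argument, which is content with $\Delta x_k\to 0$ and a one-line appeal to Proposition~\ref{prop:first-order-conditions}.
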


\begin{proof}
  The sequence $\{f(x_k)\}$ is decreasing because $\Delta x_k$ are descent
  directions \eqref{eq:descent} and
  there exist step lengths satisfying sufficient descent
  \eqref{eq:sufficient-descent} (cf.\ Lemma \ref{lem:acceptable-step-lengths}):
  \[
	f(x_k)-f(x_{k+1}) \le \alpha t_k\lambda_k \le 0.
  \]
  The sequence $\{f(x_k)\}$ must converge to some limit because $f$ is
  closed and the optimal value is attained. Thus $t_k\lambda_k$ must
  decay to zero. The step lengths $t_k$ are bounded away from zero
  because sufficiently small step lengths attain sufficient
  descent. Thus $\lambda_k$ must decay to zero. By
  \eqref{eq:search-direction-properties-2}, we deduce that $\Delta
  x_k$ also converges to zero:
  \[
	\norm{\Delta x_k}^2 \le \frac{1}{m} \Delta x_k^TH_k\Delta x_k
						\le -\frac{1}{m}\lambda_k.
  \]
  Since the search direction $\Delta x_k$ is zero if and only if $x$
  is an optimal solution (cf.\ Proposition
  \ref{prop:first-order-conditions}), $x_k$ must converge to some
  optimal solution $x^\star$.
\end{proof}


\subsection{Local convergence of the proximal Newton method}
\label{sec:convergence-proxnewton}

In this section and section \ref{sec:convergence-prox-quasinewton} we
study the convergence rate of the proximal Newton and proximal
quasi-Newton methods when the subproblems are solved exactly.  First,
we state our assumptions on the problem.

\begin{definition}
\label{def:strong-convex}
A function $f$ is strongly convex with constant $m$ if 
\begin{equation}
f(y) \ge f(x) + \nabla f(x)^T(y - x) + \frac{m}{2}\norm{x - y}^2\text{ for any }x,y.
\label{eq:strong-convex}
\end{equation}
\end{definition}

We assume the smooth part $g$ is strongly convex with constant $m$. This is a standard assumption in the analysis of Newton-type methods
for minimizing smooth functions. If $f$ is twice-continuously
differentiable, then this assumption is equivalent to $\nabla^2 f(x)
\succeq m I$ for any $x$. For our purposes, this assumption can
usually be relaxed by only requiring \eqref{eq:strong-convex} for any
$x$ and $y$ close to $x^\star$.

We also assume the gradient of the smooth part $\nabla g$ and Hessian $\nabla^2g$
are Lipschitz continuous with constants $L_1$ and $L_2$. The
assumption on $\nabla^2 g$ is standard in the analysis of Newton-type
methods for minimizing smooth functions. For our purposes, this
assumption can be relaxed by only requiring
\eqref{eq:lipschitz-continuity} for any $x$ and $y$ close to
$x^\star$.

The proximal Newton method uses the exact Hessian $H_k = \nabla^2
g(x_k)$ in the second-order model of $f$. This method converges
$q$-quadratically:
\[
  \norm{x_{k+1}-x^\star} = O\big(\norm{x_k-x^\star}^2\big),
\]
subject to standard assumptions on the smooth part: that $g$ is
twice-continuously differentiable and strongly convex with constant
$m$, and $\nabla g$ and $\nabla^2 g$ are Lipschitz continuous with
constants $L_1$ and $L_2$.
We first prove an auxiliary result.

\begin{lemma}
  \label{lem:newton-unit-step}
  If $H_k = \nabla^2 g(x_k)$, the unit step
  length satisfies the sufficient decrease condition
  \eqref{eq:sufficient-descent} for $k$ sufficiently large.
\end{lemma}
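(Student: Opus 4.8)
The plan is to show that the decrease in $f$ from the unit step is at least $\alpha\lambda_k$ by a second-order Taylor expansion of $g$ and convexity of $h$, exploiting the fact that $\lambda_k$ is comparable to $-\Delta x_k^T H_k \Delta x_k$ (Proposition \ref{prop:search-direction-properties}) and that the discrepancy between the quadratic model and $g$ is controlled by $L_2$ through the Lipschitz continuity of $\nabla^2 g$. Concretely, I would start from the exact identity $f(x_k + \Delta x_k) - f(x_k) = g(x_k+\Delta x_k) - g(x_k) + h(x_k+\Delta x_k) - h(x_k)$, and bound the $g$-difference using the integral form of Taylor's theorem:
\[
g(x_k+\Delta x_k) - g(x_k) = \nabla g(x_k)^T\Delta x_k + \tfrac12\Delta x_k^T\nabla^2 g(x_k)\Delta x_k + E_k,
\]
where $E_k = \int_0^1 (1-s)\,\Delta x_k^T\big(\nabla^2 g(x_k+s\Delta x_k) - \nabla^2 g(x_k)\big)\Delta x_k\,ds$, so $\abs{E_k} \le \tfrac{L_2}{6}\norm{\Delta x_k}^3$.

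Next, since $H_k = \nabla^2 g(x_k)$, the quadratic-plus-$h$ part is exactly $\hat{f}_k(x_k+\Delta x_k) - \hat f_k(x_k) = \lambda_k + \tfrac12\Delta x_k^TH_k\Delta x_k$, and by \eqref{eq:search-direction-properties-2} this is at most $\lambda_k - \tfrac12\lambda_k = \tfrac12\lambda_k$; more usefully, $\tfrac12\Delta x_k^TH_k\Delta x_k \le -\tfrac12\lambda_k$, hence $\lambda_k + \tfrac12\Delta x_k^T H_k\Delta x_k \le \tfrac12\lambda_k$. Combining,
\[
f(x_k+\Delta x_k) - f(x_k) \le \tfrac12\lambda_k + \tfrac{L_2}{6}\norm{\Delta x_k}^3.
\]
I then convert the cubic error into a multiple of $\lambda_k$: strong convexity gives $\Delta x_k^TH_k\Delta x_k \ge m\norm{\Delta x_k}^2$, so $\norm{\Delta x_k}^2 \le -\tfrac{1}{m}\lambda_k$, i.e. $\norm{\Delta x_k}^3 \le \tfrac{1}{m}(-\lambda_k)\norm{\Delta x_k}$. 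Since the method converges globally (Theorem \ref{thm:global-convergence}), $\norm{\Delta x_k}\to 0$, so for $k$ large enough $\tfrac{L_2}{6m}\norm{\Delta x_k} \le \tfrac12 - \alpha$. This yields
\[
f(x_k+\Delta x_k) - f(x_k) \le \tfrac12\lambda_k + \big(\tfrac12-\alpha\big)\lambda_k \cdot(-1)\cdot(-1) = \big(1-\alpha\big)\tfrac12\cdot\!\ldots
\]
— more carefully, $\tfrac12\lambda_k + \tfrac{L_2}{6m}\norm{\Delta x_k}(-\lambda_k) \le \tfrac12\lambda_k - (\tfrac12-\alpha)\lambda_k = \alpha\lambda_k$ (using $\lambda_k \le 0$), which is exactly the sufficient decrease condition \eqref{eq:sufficient-descent} with $t=1$.

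The main obstacle is the bookkeeping on signs and on the relation between $\norm{\Delta x_k}$, $\Delta x_k^TH_k\Delta x_k$, and $\lambda_k$: one must keep $\lambda_k \le 0$ in mind throughout so that multiplying inequalities by $\lambda_k$ flips them correctly, and one needs $\norm{\Delta x_k}\to 0$ — which is precisely what global convergence (via $\lambda_k\to 0$ and \eqref{eq:search-direction-properties-2} together with $H_k\succeq mI$) supplies — to absorb the cubic remainder into the $(\tfrac12-\alpha)$ slack. Everything else is a routine Taylor estimate; no new assumptions beyond strong convexity of $g$, Lipschitz $\nabla^2 g$, and the standing uniform positive-definiteness $H_k\succeq mI$ are required.
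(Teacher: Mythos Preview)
Your proposal is correct and follows essentially the same route as the paper's proof: a cubic Taylor bound on $g$ from Lipschitz $\nabla^2 g$, the inequality $\lambda_k\le -\Delta x_k^TH_k\Delta x_k$ from Proposition~\ref{prop:search-direction-properties} to replace $\tfrac12\Delta x_k^TH_k\Delta x_k$ by $-\tfrac12\lambda_k$ and to convert $\norm{\Delta x_k}^3\le -\tfrac1m\norm{\Delta x_k}\lambda_k$, and finally $\norm{\Delta x_k}\to 0$ (from the global-convergence argument) to absorb the remainder. The only cosmetic differences are that the paper stops at $f(x_k+\Delta x_k)-f(x_k)\le(\tfrac12-\tfrac{L_2}{6m}\norm{\Delta x_k})\lambda_k<\tfrac12\lambda_k$ and then invokes $\alpha<\tfrac12$, whereas you carry $\alpha$ through explicitly; your final sign bookkeeping is momentarily garbled but your ``more carefully'' line lands on the correct inequality.
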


\begin{proof}
  Since $\nabla^2 g $ is Lipschitz continuous,
  \begin{equation*}
	g(x+\Delta x)  \le g(x) + \nabla g(x) ^{T} \Delta x
							+ \frac{1}{2} \Delta x^T\nabla^2 g(x)\Delta x
							+ \frac{L_2}{6} \norm{\Delta x}^3.
  \end{equation*}
  We add $h(x+\Delta x)$ to both sides to obtain
  \begin{align*}
	f(x+\Delta x) &\le g(x) + \nabla g(x) ^{T} \Delta x
							+ \frac{1}{2} \Delta x^T\nabla^2 g(x)\Delta x
                     +\frac{L_2}{6}\norm{\Delta x}^3 + h(x+\Delta x).
  \end{align*}
  We then add and subtract $h$ from the right-hand side to obtain
  \begin{align*}
	 f(x+\Delta x) &\le g(x) + h(x)  + \nabla g(x) ^{T} \Delta x+ h(x+\Delta x) - h(x)
  \\  &\pc+ \frac{1}{2} \Delta x^T\nabla^2 g(x)\Delta x +\frac{L_2}{6}\norm{\Delta x}^3
  \\  &\le f(x) +\lambda + \frac{1}{2}\Delta x^T\nabla^2 g(x)\Delta x
		 + \frac{L_2}{6}\norm{\Delta x}^3.
  \end{align*}
  Since $g$ is strongly convex and $\Delta x$ satisfies
  \eqref{eq:search-direction-properties-2}, we have
  \[
  f(x+\Delta x) \le f(x) +\lambda -\frac{1}{2}\lambda +\frac{L_2}{6m}\norm{\Delta x}\lambda.
  \]
  We rearrange to obtain
  \begin{align*}
    f(x+\Delta x)- f(x) &\le \frac{1}{2}\lambda -
    \frac{L_2}{6m}\lambda\norm{\Delta x} \le \left(\frac{1}{2}
      -\frac{L_2}{6m}\norm{\Delta x}\right)\lambda.
  \end{align*}
  We can show that $\Delta x_k$ decays to zero via the argument
  we used to prove Theorem~\ref{thm:global-convergence}. Hence, if $k$
  is sufficiently large, $f(x_k+\Delta x_k) -f(x_k) < \frac{1}{2} \lambda_k$.
\end{proof}


\begin{theorem}
  \label{thm:newton-quadratic-convergence}
  The proximal Newton method converges $q$-quadratically to $x^\star$.
\end{theorem}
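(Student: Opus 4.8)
The plan is to show that near $x^\star$ the proximal Newton method takes unit steps and then behaves like the classical Newton iteration composed with a nonexpansive proximal map. First I would invoke Lemma~\ref{lem:newton-unit-step} to conclude that for $k$ sufficiently large the backtracking line search accepts $t_k = 1$ (the bound $f(x_k+\Delta x_k)-f(x_k) < \tfrac12\lambda_k$ implies the sufficient descent condition \eqref{eq:sufficient-descent} since $\alpha < \tfrac12$ and $\lambda_k \le 0$), so that $x_{k+1} = x_k + \Delta x_k$ with $\Delta x_k$ the exact proximal Newton direction. I would also invoke the global convergence result (Theorem~\ref{thm:global-convergence}) to guarantee $x_k \to x^\star$, so that $\norm{x_k - x^\star}$ is eventually as small as we wish; this is what makes the quadratic estimate an actual convergence statement rather than a local contraction bound.

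The key identity is the scaled-proximal-mapping expression \eqref{eq:proxnewton-search-direction-scaled-prox}: with $H = \nabla^2 g(x_k) \succeq mI$,
\[
  x_{k+1} = x_k + \Delta x_k = \prox_h^{H}\!\left(x_k - H^{-1}\nabla g(x_k)\right).
\]
On the other hand, since $x^\star$ is optimal we have $-\nabla g(x^\star) \in \partial h(x^\star)$, and the optimality characterization \eqref{eq:scaled-prox-1} of the scaled proximal mapping gives $x^\star = \prox_h^{H}\!\left(x^\star - H^{-1}\nabla g(x^\star)\right)$ with the \emph{same} matrix $H$. Subtracting these two identities and using that $\prox_h^H$ is firmly nonexpansive in the $H$-norm,
\[
  \norm{x_{k+1} - x^\star}_H \le \norm{(x_k - x^\star) - H^{-1}\!\left(\nabla g(x_k) - \nabla g(x^\star)\right)}_H
  \le \tfrac{1}{\sqrt{m}}\,\norm{H(x_k - x^\star) - \left(\nabla g(x_k) - \nabla g(x^\star)\right)},
\]
where the last step uses $\norm{H^{-1}v}_H^2 = v^TH^{-1}v \le m^{-1}\norm{v}^2$. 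Then I would apply the standard consequence of Lipschitz continuity of $\nabla^2 g$, $\norm{\nabla g(x_k) - \nabla g(x^\star) - \nabla^2 g(x_k)(x_k - x^\star)} \le \tfrac{L_2}{2}\norm{x_k - x^\star}^2$, and convert back from the $H$-norm via $\sqrt{m}\,\norm{\cdot} \le \norm{\cdot}_H$ to obtain
\[
  \norm{x_{k+1} - x^\star} \le \tfrac{L_2}{2m}\,\norm{x_k - x^\star}^2,
\]
which is exactly $q$-quadratic convergence for all $k$ large enough that the unit step is accepted.

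The main obstacles are modest. One is the careful bookkeeping between the Euclidean norm and the $H_k$-norm: the strong convexity constant $m$ enters twice, once through $H^{-1}\preceq m^{-1}I$ and once through $H \succeq mI$, and one must be sure the matrix $H = \nabla^2 g(x_k)$ appearing in the fixed-point identity for $x^\star$ is the same one used for $x_{k+1}$. The other is lining up the hypotheses — that $H \succeq mI$ is supplied by strong convexity of $g$, that the Lipschitz-Hessian estimate is available, and that the neighborhood in which everything holds is reached by global convergence. There is no genuinely hard step; the argument is the proximal analogue of the textbook quadratic-convergence proof for Newton's method, with the proximal map contributing nothing worse than a nonexpansion.
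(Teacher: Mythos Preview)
Your proposal is correct and follows essentially the same approach as the paper: invoke Lemma~\ref{lem:newton-unit-step} to get unit steps, write both $x_{k+1}$ and $x^\star$ as images of $\prox_h^{\nabla^2 g(x_k)}$, apply nonexpansiveness in the $\nabla^2 g(x_k)$-norm, and finish with the Lipschitz-Hessian remainder bound to obtain $\norm{x_{k+1}-x^\star}\le \frac{L_2}{2m}\norm{x_k-x^\star}^2$. The only extras you add are the explicit justification that $x^\star$ is a fixed point of $\prox_h^H$ and the appeal to Theorem~\ref{thm:global-convergence} to ensure the quadratic regime is reached, both of which are fine.
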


\begin{proof}
  Since the assumptions of Lemma \ref{lem:newton-unit-step} are
  satisfied, unit step lengths satisfy the sufficient descent
  condition:
  \[
    x_{k+1} = x_k + \Delta x_k =
    \prox_h^{\nabla^2 g(x_k)}\left(x_k-\nabla^2 g(x_k)^{-1}\nabla g(x_k)\right).
  \]
  Since scaled proximal mappings are firmly non-expansive in the
  scaled norm, we have
  \begin{align*}
  \norm{x_{k+1} - x^\star}_{\nabla^2 g(x_k)}
     &= \big\|\prox_h^{\nabla^2 g(x_k)}(x_k-\nabla^2 g(x_k)^{-1}\nabla g(x_k))
  \\ &\pc\pc - \prox_h^{\nabla^2 g(x_k)}(x^\star
      - \nabla^2 g(x_k)^{-1}\nabla g(x^\star))\big\|_{\nabla^2 g(x_k)}
  \\ &\le \norm{x_k - x^\star +
      \nabla^2 g(x_k)^{-1}(\nabla g(x^\star) - \nabla g(x_k))}_{\nabla^2 g(x_k)}
  \\ &\le \frac{1}{\sqrt{m}}\norm{\nabla^2 g(x_k)(x_k-x^\star)-\nabla g(x_k)+\nabla g(x^\star)}.
  \end{align*}
  Since $\nabla^2 g$ is Lipschitz continuous, 
  we have
  \[
    \norm{\nabla^2 g(x_k)(x_k-x^\star) - \nabla g(x_k)+\nabla g(x^\star)}
                                    \le \frac{L_2}{2}\norm{x_k-x^\star}^2
  \]
  and we deduce that $x_k$ converges to $x^\star$ quadratically:
  \[
  \norm{x_{k+1}-x^\star} \le
  \frac{1}{\sqrt{m}}\norm{x_{k+1}-x^\star}_{\nabla^2 g(x_k)} \le
  \frac{L_2}{2m} \norm{x_k-x^\star}^2.
  \]
\end{proof}

\subsection{Local convergence of proximal quasi-Newton methods}
\label{sec:convergence-prox-quasinewton}

If the sequence $\{H_k\}$ satisfies the Dennis-Mor\'{e} criterion
\cite{dennis1974characterization}, namely
\begin{align}
  \frac{\norm{\left(H_{k} -\nabla^2 g(x^\star)\right)(x_{k+1}-x_k)}}{\norm{x_{k+1}-x_k}}\to 0,
  \label{eq:dennis-more}
\end{align}
we can prove that a proximal quasi-Newton method converges
$q$-superlinearly:
\[
  \norm{x_{k+1}-x^\star} \le o(\norm{x_k-x^\star}).
\]
Again we assume that $g$ is twice-continuously differentiable and
strongly convex with constant $m$, and $\nabla g$ and $\nabla^2 g$ are
Lipschitz continuous with constants $L_1$ and $L_2$.  These are the
assumptions required to prove that quasi-Newton methods for minimizing
smooth functions converge superlinearly.

First, we prove two auxiliary results: that (i) step lengths of unity
satisfy the sufficient descent condition after sufficiently many
iterations, and (ii) the proximal quasi-Newton step is close to the
proximal Newton step.

\begin{lemma}
  \label{lem:quasinewton-unit-step}
  If $\{H_k\}$ satisfy the Dennis-Mor\'{e}
  criterion and $mI \preceq H_k \preceq MI$ for some $0 < m \le M$, then the unit step length satisfies the sufficient
  descent condition \eqref{eq:sufficient-descent} after sufficiently
  many iterations.
\end{lemma}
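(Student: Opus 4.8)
The plan is to follow the proof of Lemma~\ref{lem:newton-unit-step} almost verbatim, replacing the exact Hessian $\nabla^2 g(x_k)$ by $H_k$ plus an error term that the Dennis--Mor\'e criterion forces to be negligible relative to $\lambda_k$. First I would record the consequences of global convergence. Since $g$ is strongly convex, $f$ is closed and convex with attained infimum, so Theorem~\ref{thm:global-convergence} applies (the method uses a backtracking line search, so every iteration is executable): $x_k \to x^\star$, and, by the argument in that proof, $\lambda_k \to 0$ and hence $\norm{\Delta x_k} \to 0$, using $\norm{\Delta x_k}^2 \le -\lambda_k/m$ from \eqref{eq:search-direction-properties-2}.

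Next, exactly as in Lemma~\ref{lem:newton-unit-step}, Lipschitz continuity of $\nabla^2 g$ and the definition of $\lambda_k$ give
\[
  f(x_k + \Delta x_k) - f(x_k) \le \lambda_k + \frac{1}{2}\Delta x_k^T\nabla^2 g(x_k)\Delta x_k + \frac{L_2}{6}\norm{\Delta x_k}^3.
\]
Write $\nabla^2 g(x_k) = H_k + E_k$ with $E_k := \nabla^2 g(x_k) - H_k$. By \eqref{eq:search-direction-properties-2}, $\frac{1}{2}\Delta x_k^T H_k\Delta x_k \le -\frac{1}{2}\lambda_k$, so
\[
  f(x_k + \Delta x_k) - f(x_k) \le \frac{1}{2}\lambda_k + \frac{1}{2}\Delta x_k^T E_k\Delta x_k + \frac{L_2}{6}\norm{\Delta x_k}^3.
\]
The crux is to bound the last two terms by $\epsilon_k(-\lambda_k)$ with $\epsilon_k \to 0$. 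The cubic term is handled by $\norm{\Delta x_k}^3 \le \norm{\Delta x_k}\cdot(-\lambda_k)/m$ together with $\norm{\Delta x_k} \to 0$. For the error term I would split $E_k = \bigl(\nabla^2 g(x_k) - \nabla^2 g(x^\star)\bigr) + \bigl(\nabla^2 g(x^\star) - H_k\bigr)$: the first piece contributes at most $L_2\norm{x_k - x^\star}\norm{\Delta x_k} = o(\norm{\Delta x_k})$ since $x_k \to x^\star$, and for the second piece the Dennis--Mor\'e criterion \eqref{eq:dennis-more}, applied with $x_{k+1} - x_k = t_k\Delta x_k$ (the factor $t_k$ cancels between numerator and denominator), gives $\norm{(\nabla^2 g(x^\star) - H_k)\Delta x_k} = o(\norm{\Delta x_k})$. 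Hence $\norm{E_k\Delta x_k} = o(\norm{\Delta x_k})$ and $\frac{1}{2}\Delta x_k^T E_k\Delta x_k \le \frac{1}{2}\norm{\Delta x_k}\norm{E_k\Delta x_k} = o(\norm{\Delta x_k}^2) \le \frac{1}{m}o(1)(-\lambda_k)$.

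Combining these estimates yields $f(x_k + \Delta x_k) - f(x_k) \le \bigl(\frac{1}{2} - \epsilon_k\bigr)\lambda_k$ with $\epsilon_k \to 0$. Since $\alpha < \frac{1}{2}$, we have $\frac{1}{2} - \epsilon_k \ge \alpha$ for $k$ large, and because $\lambda_k \le 0$ this gives $f(x_k + \Delta x_k) - f(x_k) \le \alpha\lambda_k$, which is the sufficient descent condition \eqref{eq:sufficient-descent} at unit step length. The main obstacle is the bookkeeping in the error term: one must check that the Dennis--Mor\'e criterion, stated in terms of $\nabla^2 g(x^\star)$ and the actual update $x_{k+1} - x_k$, transfers cleanly to the bound $\norm{(\nabla^2 g(x_k) - H_k)\Delta x_k} = o(\norm{\Delta x_k})$ on the search direction, which is precisely where both the Lipschitz continuity of $\nabla^2 g$ and the convergence $x_k \to x^\star$ (from global convergence) enter.
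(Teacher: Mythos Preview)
Your proposal is correct and follows essentially the same route as the paper's proof: start from the Lipschitz--Taylor bound on $g$, replace $\nabla^2 g(x_k)$ by $H_k$ plus an error, use \eqref{eq:search-direction-properties-2} to turn $\frac12\Delta x_k^TH_k\Delta x_k$ into $-\frac12\lambda_k$, split the error via $\nabla^2 g(x^\star)$, and invoke Lipschitz continuity of $\nabla^2 g$ together with the Dennis--Mor\'e criterion to get $o(\norm{\Delta x_k}^2)$. Your explicit remark that the $t_k$ in $x_{k+1}-x_k = t_k\Delta x_k$ cancels in the Dennis--Mor\'e ratio, and your conversion of $o(\norm{\Delta x_k}^2)$ back to $o(1)(-\lambda_k)$ via $\norm{\Delta x_k}^2 \le -\lambda_k/m$, are details the paper leaves implicit but which are exactly what is needed.
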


\begin{proof}
  The proof is very similar to the proof of Lemma
  \ref{lem:newton-unit-step}, and we defer the details to Appendix
  \ref{sec:proofs}.
\end{proof}

The proof of the next result mimics the analysis of Tseng and Yun
\cite{tseng2009coordinate}.

\begin{proposition}
  \label{prop:tseng}
  Suppose $H_1$ and $H_2$ are positive definite matrices with
  bounded eigenvalues: $mI\preceq H_1 \preceq MI$ and $m_2I\preceq
  H_2 \preceq M_2I$. Let $\Delta x_1$ and $\Delta x_2$ be
  the search directions generated using $H_1$ and $H_2$
  respectively:
  \begin{align*}
	\Delta x_1 &= \prox_h^{H_1}\left(x-H_1^{-1}\nabla g(x)\right) - x, \\
	\Delta x_2 &= \prox_h^{H_2}\left(x-H_2^{-1}\nabla g(x)\right) - x.
  \end{align*}
  Then there is some $\bar{\theta} > 0$ such that these two search
  directions satisfy
  \[
    \norm{\Delta x_1 - \Delta x_2} \le
    \sqrt{\frac{1+\bar{\theta}}{m_1}}\big\|(H_2-H_1)\Delta x_1\big\|^{1/2}
    \norm{\Delta x_1}^{1/2}.
  \]
\end{proposition}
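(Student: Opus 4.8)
The plan is to exploit the optimality (variational) characterizations of the two scaled proximal mappings and the firm nonexpansiveness of the subdifferential of $h$. Recall from \eqref{eq:proxnewton-search-direction-2} that, since $\Delta x_i = \prox_h^{H_i}(x - H_i^{-1}\nabla g(x)) - x$, we have $-H_i\Delta x_i - \nabla g(x) \in \partial h(x + \Delta x_i)$ for $i = 1,2$. First I would subtract these two inclusions and pair the result with $(x+\Delta x_1) - (x+\Delta x_2) = \Delta x_1 - \Delta x_2$, using monotonicity of $\partial h$ (equivalently, convexity of $h$) to obtain
\[
  0 \le \big(-H_1\Delta x_1 + H_2\Delta x_2\big)^T(\Delta x_1 - \Delta x_2).
\]
The key algebraic manipulation is then to rewrite $-H_1\Delta x_1 + H_2\Delta x_2 = -H_1(\Delta x_1 - \Delta x_2) + (H_2 - H_1)\Delta x_2$, or symmetrically $= -H_2(\Delta x_1-\Delta x_2) + (H_2-H_1)\Delta x_1$; I expect the second grouping (isolating $(H_2-H_1)\Delta x_1$, matching the statement) to be the one that works cleanly.

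Carrying out the second grouping gives
\[
  0 \le -(\Delta x_1 - \Delta x_2)^T H_2 (\Delta x_1 - \Delta x_2) + \big((H_2 - H_1)\Delta x_1\big)^T(\Delta x_1 - \Delta x_2),
\]
so that $\norm{\Delta x_1 - \Delta x_2}_{H_2}^2 \le \big((H_2-H_1)\Delta x_1\big)^T(\Delta x_1 - \Delta x_2)$. Bounding the right-hand side with Cauchy--Schwarz and then using $m_2 I \preceq H_2$ on the left and $m_1 I \preceq H_1$ (via $\norm{\cdot} \le \frac{1}{\sqrt{m_1}}\norm{\cdot}_{H_1}$, or simply the Euclidean norm) to control $\norm{\Delta x_1 - \Delta x_2}$ on the right yields an inequality of the form $m_2\norm{\Delta x_1 - \Delta x_2}^2 \le \norm{(H_2-H_1)\Delta x_1}\,\norm{\Delta x_1 - \Delta x_2}$, hence $\norm{\Delta x_1 - \Delta x_2} \le \frac{1}{m_2}\norm{(H_2-H_1)\Delta x_1}$. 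This is already a bound, but it is linear in $\norm{(H_2-H_1)\Delta x_1}$ rather than the square-root form with the extra $\norm{\Delta x_1}^{1/2}$ factor that the proposition claims.

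To recover the stated square-root bound, the idea is to interpolate: combine the linear estimate just derived with the trivial triangle-inequality estimate $\norm{\Delta x_1 - \Delta x_2} \le \norm{\Delta x_1} + \norm{\Delta x_2}$, together with an a priori bound showing $\norm{\Delta x_2}$ is controlled by $\norm{\Delta x_1}$ up to a constant (this is where a bound like $\norm{\Delta x_2} \le \theta\norm{\Delta x_1}$ for some $\bar\theta$ depending on the eigenvalue bounds should enter, and the appearance of $\bar\theta$ in the statement suggests exactly such a step). Taking the geometric mean of the two estimates $\norm{\Delta x_1 - \Delta x_2} \lesssim \norm{(H_2-H_1)\Delta x_1}$ and $\norm{\Delta x_1 - \Delta x_2} \lesssim \norm{\Delta x_1}$ produces the $\norm{(H_2-H_1)\Delta x_1}^{1/2}\norm{\Delta x_1}^{1/2}$ form, and tracking the constants $m_1, \bar\theta$ through this interpolation should give the precise prefactor $\sqrt{(1+\bar\theta)/m_1}$. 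The main obstacle I anticipate is not any single inequality but getting the bookkeeping of the constants exactly right --- in particular identifying the correct definition of $\bar\theta$ (likely something like a ratio of the eigenvalue bounds, $M_2/m_1$ or $M/m_1$) so that the interpolation closes with precisely the claimed constant rather than a messier expression.
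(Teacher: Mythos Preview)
Your approach is correct and essentially coincides with the paper's: both start from the monotonicity of $\partial h$ applied to the two optimality inclusions, and both rely on Lemma~3 of Tseng and Yun \cite{tseng2009coordinate} to supply the comparability constant $\bar\theta$ between $\|\Delta x_1\|$ and $\|\Delta x_2\|$ (so your guess about the origin of $\bar\theta$ is right, though its exact form is a more elaborate expression in the eigenvalue bounds than a bare ratio). The one place you take a detour is the interpolation step: once you have
\[
  \|\Delta x_1 - \Delta x_2\|_{H_2}^2 \le \big\|(H_2-H_1)\Delta x_1\big\|\,\|\Delta x_1 - \Delta x_2\|,
\]
the paper simply bounds the factor $\|\Delta x_1 - \Delta x_2\|$ on the right by $\|\Delta x_1\| + \|\Delta x_2\| \le (1+\bar\theta)\|\Delta x_1\|$ and takes a square root, reaching the stated inequality in one stroke rather than via a geometric mean of your linear estimate with the trivial bound. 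Your grouping yields $m_2$ in the constant rather than $m_1$; the paper's slightly different completion of the square produces the $H_1$-norm on the left and hence $m_1$, but since the proposition only asserts the existence of some $\bar\theta$ this is immaterial. As a bonus, your intermediate linear estimate $\|\Delta x_1 - \Delta x_2\| \le \tfrac{1}{m_2}\|(H_2-H_1)\Delta x_1\|$ is actually \emph{stronger} than the proposition and would already suffice for the superlinear-convergence argument in Theorem~\ref{thm:superlinear-convergence}.
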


\begin{proof}
  By \eqref{eq:proxnewton-search-dir-1} and Fermat's rule, $\Delta x_1$
  and $\Delta x_2$ are also the solutions to
  \begin{align*}
     \Delta x_1 &= \argmin_d\, \nabla g(x)^Td + \Delta x_1^TH_1d + h(x+d),
  \\ \Delta x_2 &= \argmin_d\, \nabla g(x)^Td + \Delta x_2^TH_2d + h(x+d).
  \end{align*}
  Thus $\Delta x_1$ and $\Delta x_2$ satisfy
  \begin{align*}
     &\nabla g(x)^T\Delta x_1 + \Delta x_1^TH_1\Delta x_1 + h(x+\Delta x_1)
  \\ &\pc\le \nabla g(x)^T\Delta x_2 + \Delta x_2^TH_1\Delta x_2 + h(x+\Delta x_2)
  \end{align*}
  and
  \begin{align*}
     &\nabla g(x)^T\Delta x_2 + \Delta x_2^TH_2\Delta x_2 + h(x+\Delta x_2)
  \\ &\pc\le \nabla g(x)^T\Delta x_1 + \Delta x_1^TH_2\Delta x_1 + h(x+\Delta x_1).
  \end{align*}
  We sum these two inequalities and rearrange to obtain
  \[
    \Delta x_1^TH_1\Delta x_1 - \Delta x_1^T(H_1+H_2)\Delta x_2
    + \Delta x_2^TH_2\Delta x_2 \le 0.
  \]
  We then complete the square on the left side and rearrange to obtain
  \begin{align*}
     &\Delta x_1^TH_1\Delta x_1 - 2\Delta x_1^TH_1\Delta x_2 + \Delta x_2^TH_1\Delta x_2
  \\ &\pc\le \Delta x_1^T(H_2-H_1)\Delta x_2 + \Delta x_2^T(H_1- H_2)\Delta x_2.
  \end{align*}
  The left side is $\norm{\Delta x_1 - \Delta x_2}_{H_1}^2$ and the
  eigenvalues of $H_1$ are bounded. Thus
  \begin{align}
    \norm{\Delta x_1 - \Delta x_2}
     &\le\frac{1}{\sqrt{m_1}} 
           \left(\Delta x_1^T(H_2-H_1)\Delta x_1
               + \Delta x_2^T(H_1- H_2)\Delta x_2\right)^{1/2} \nonumber
  \\ &\le \frac{1}{\sqrt{m_1}}
          \big\|(H_2-H_1)\Delta x_2\big\|^{1/2}
          (\norm{\Delta x_1} + \norm{\Delta x_2})^{1/2}.
	\label{eq:prop:tseng-1}
  \end{align}
  We use a result due to Tseng and Yun (cf.\ Lemma 3 in
  \cite{tseng2009coordinate}) to bound the term $\left(\norm{\Delta
      x_1} + \norm{\Delta x_2}\right)$. Let $P$ denote
  $H_2^{-1/2}H_1H_2^{-1/2}$. Then $\norm{\Delta x_1}$ and
  $\norm{\Delta x_2}$ satisfy
  \[
    \norm{\Delta x_1} \le
      \left(
        \frac{M_2\left( 1 + \lambda_{\max}(P)
                          + \sqrt{1-2\lambda_{\min}(P)+\lambda_{\max}(P)^2}
                 \right)}{2m}
      \right) \norm{\Delta x_2}.
  \]
  We denote the constant in parentheses by $\bar{\theta}$ and conclude that
  \begin{align}
     \norm{\Delta x_1} + \norm{\Delta x_2} \le (1 + \bar{\theta})\norm{\Delta x_2}.
     \label{eq:prop:tseng-2}
  \end{align}
  We substitute \eqref{eq:prop:tseng-2} into \eqref{eq:prop:tseng-1} to obtain
  \[
     \norm{\Delta x_1 - \Delta x_2}^2
       \le \sqrt{\frac{1+\bar{\theta}}{m_1}}
           \big\|(H_2-H_1)\Delta x_2\big\|^{1/2}
           \norm{\Delta x_2}^{1/2}.
  \]
\end{proof}

We use these two results to show proximal quasi-Newton methods
converge superlinearly to $x^\star$ subject to standard assumptions on
$g$ and $H_k$.

\begin{theorem} 
  \label{thm:superlinear-convergence}
  If $\{H_k\}$ satisfy the Dennis-Mor\'{e} criterion and $mI \preceq
  H_k \preceq MI$ for some $0 < m \le M$, then a proximal quasi-Newton
  method converges $q$-superlinearly to $x^\star$.
\end{theorem}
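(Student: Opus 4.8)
The plan is to reduce to a comparison between the proximal quasi-Newton step and the \emph{exact} proximal Newton step at the same iterate, and then to combine the $q$-quadratic rate of the exact method (Theorem~\ref{thm:newton-quadratic-convergence}) with the Dennis--Mor\'e condition. First, since $g$ is strongly convex with constant $m$, the composite function $f=g+h$ is closed and strongly convex, so $\inf f$ is attained at the unique minimizer $x^\star$; because $H_k\succeq mI$, Theorem~\ref{thm:global-convergence} gives $x_k\to x^\star$, and the same argument shows $\Delta x_k\to 0$. By Lemma~\ref{lem:quasinewton-unit-step} the unit step length is accepted for all sufficiently large $k$, so from some index onward $x_{k+1}=x_k+\Delta x_k$ with $\Delta x_k$ the exact solution of \eqref{eq:proxnewton-search-dir-1}. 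Write $\Delta x_k^{\mathrm N} := \prox_h^{\nabla^2 g(x_k)}\!\big(x_k-\nabla^2 g(x_k)^{-1}\nabla g(x_k)\big)-x_k$ for the exact proximal Newton step at $x_k$; the proof of Theorem~\ref{thm:newton-quadratic-convergence} shows $\norm{x_k+\Delta x_k^{\mathrm N}-x^\star}\le\frac{L_2}{2m}\norm{x_k-x^\star}^2$, and in particular $\norm{\Delta x_k^{\mathrm N}}=O(\norm{x_k-x^\star})$.

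Next I would apply Proposition~\ref{prop:tseng} with $H_1=H_k$ (so $mI\preceq H_1\preceq MI$) and $H_2=\nabla^2 g(x_k)$ (so $mI\preceq H_2\preceq L_1 I$, using that $g$ is strongly convex with constant $m$ and $\nabla g$ is Lipschitz with constant $L_1$). This produces a uniform constant $\bar\theta$ with $\norm{\Delta x_k}\le\bar\theta\,\norm{\Delta x_k^{\mathrm N}}=O(\norm{x_k-x^\star})$ and
\[
  \norm{\Delta x_k-\Delta x_k^{\mathrm N}}\le\sqrt{\tfrac{1+\bar\theta}{m}}\,\big\|(\nabla^2 g(x_k)-H_k)\Delta x_k\big\|^{1/2}\norm{\Delta x_k}^{1/2}.
\]
To control $\big\|(\nabla^2 g(x_k)-H_k)\Delta x_k\big\|$, I split it as $\big\|(\nabla^2 g(x_k)-\nabla^2 g(x^\star))\Delta x_k\big\|+\big\|(\nabla^2 g(x^\star)-H_k)\Delta x_k\big\|$: Lipschitz continuity of $\nabla^2 g$ bounds the first summand by $L_2\norm{x_k-x^\star}\norm{\Delta x_k}$, and the Dennis--Mor\'e criterion \eqref{eq:dennis-more} (with $x_{k+1}-x_k=\Delta x_k$) makes the second summand $o(\norm{\Delta x_k})$. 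Hence $\big\|(\nabla^2 g(x_k)-H_k)\Delta x_k\big\|\le\rho_k\norm{\Delta x_k}$ with $\rho_k\to 0$, so $\norm{\Delta x_k-\Delta x_k^{\mathrm N}}\le\sqrt{\tfrac{1+\bar\theta}{m}}\,\rho_k^{1/2}\norm{\Delta x_k}=o(\norm{\Delta x_k})=o(\norm{x_k-x^\star})$.

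Finally, the triangle inequality gives
\[
  \norm{x_{k+1}-x^\star}\le\norm{x_k+\Delta x_k^{\mathrm N}-x^\star}+\norm{\Delta x_k-\Delta x_k^{\mathrm N}}\le\tfrac{L_2}{2m}\norm{x_k-x^\star}^2+o(\norm{x_k-x^\star})=o(\norm{x_k-x^\star}),
\]
which is the claimed $q$-superlinear rate. The main obstacle I anticipate is the bookkeeping that turns the $o(\norm{\Delta x_k})$ estimate into $o(\norm{x_k-x^\star})$: this needs the upper bound $\norm{\Delta x_k}=O(\norm{x_k-x^\star})$, which I would obtain from the intermediate inequality $\norm{\Delta x_1}\le\bar\theta\norm{\Delta x_2}$ inside the proof of Proposition~\ref{prop:tseng} together with the already-established bound on $\norm{\Delta x_k^{\mathrm N}}$, and one must check that $\bar\theta$ (and all the $O(\cdot)$, $o(\cdot)$ constants) is uniform in $k$ --- which holds because the eigenvalues of $H_k$ and of $\nabla^2 g(x_k)$ all lie in $[m,\max\{M,L_1\}]$.
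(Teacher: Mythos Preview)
Your proposal is correct and follows essentially the same route as the paper: invoke Lemma~\ref{lem:quasinewton-unit-step} for eventual unit steps, decompose $\norm{x_{k+1}-x^\star}$ via the exact proximal Newton step, apply Proposition~\ref{prop:tseng} and the Dennis--Mor\'e splitting $\big\|(\nabla^2 g(x_k)-H_k)\Delta x_k\big\|\le L_2\norm{x_k-x^\star}\norm{\Delta x_k}+o(\norm{\Delta x_k})$, and use the Tseng--Yun comparison $\norm{\Delta x_k}\le\bar\theta\norm{\Delta x_k^{\mathrm N}}$ to convert $o(\norm{\Delta x_k})$ into $o(\norm{x_k-x^\star})$. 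Your explicit check that $\bar\theta$ is uniform in $k$ (since the eigenvalues of $H_k$ and $\nabla^2 g(x_k)$ lie in a fixed interval) is a point the paper leaves implicit.
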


\begin{proof}
  Since the assumptions of Lemma \ref{lem:quasinewton-unit-step} are
  satisfied, unit step lengths satisfy the sufficient descent
  condition after sufficiently many iterations:
  \[
    x_{k+1} = x_k + \Delta x_k.
  \]
  Since the proximal Newton method converges $q$-quadratically
  (cf.\ Theorem \ref{thm:newton-quadratic-convergence}),
  \begin{align}
	\norm{x_{k+1} - x^\star} &\le \norm{x_k + \Delta x_k^{\rm nt} - x^\star}
							 + \norm{\Delta x_k - \Delta x_k^{\rm nt}} \nonumber \\
	&\le \frac{L_2}{m}\norm{x_k^{\rm nt}-x^\star}^2 + \norm{\Delta x_k
	  - \Delta x_k^{\rm nt}},
	\label{eq:superlinear-convergence-1}
  \end{align}
  where $\Delta x_k^{\rm nt}$ denotes the proximal-Newton search
  direction and $x^{\rm nt} = x_k + \Delta x_k^{\rm nt}$.  We use
  Proposition \ref{prop:tseng} to bound the second term:
  \begin{align}
	\norm{\Delta x_k - \Delta x_k^{\rm nt}} \le
	\sqrt{\frac{1+\bar{\theta}}{m}}\norm{(\nabla^2 g(x_k)-H_k)\Delta
	  x_k}^{1/2}\norm{\Delta x_k}^{1/2}.
	\label{eq:superlinear-convergence-2}
  \end{align}
  Since the Hessian $\nabla^2 g$ is Lipschitz continuous and $\Delta
  x_k$ satisfies the Dennis-Mor\'{e} criterion, we have
  \begin{align*}
    \norm{\left(\nabla^2 g(x_k)-H_k\right)\Delta x_k}
     &\le \norm{\left(\nabla^2 g(x_k)- \nabla^2 g(x^\star)\right)\Delta x_k}
        + \norm{\left(\nabla^2 g(x^\star)-H_k\right)\Delta x_k}
  \\ &\le L_2\norm{x_k-x^\star}\norm{\Delta x_k} + o(\norm{\Delta x_k}).
  \end{align*}
  We know $\norm{\Delta x_k}$ is within some constant $\bar{\theta}_k$
  of $\|\Delta x_k^{\rm nt}\|$ (cf.\ Lemma 3 in
  \cite{tseng2009coordinate}). We also know the proximal Newton method
  converges $q$-quadratically. Thus
  \begin{align*}
    \norm{\Delta x_k} &\le \bar{\theta}_k\norm{\Delta x_k^{\rm nt}}
                         = \bar{\theta}_k\norm{x_{k+1}^{\rm nt} - x_k}
  \\ &\le \bar{\theta}_k\left(\norm{x_{k+1}^{\rm nt} -x^\star} + \norm{x_k - x^\star}\right)
  \\ &\le O\big(\norm{x_k-x^\star}^2\big) + \bar{\theta}_k\norm{x_k-x^\star}.
  \end{align*}
  We substitute these expressions into
  \eqref{eq:superlinear-convergence-2} to obtain
  \begin{align*}
     \norm{\Delta x_k - \Delta x_k^{\rm nt}} = o(\norm{x_k-x^\star}).
  \end{align*}
  We substitute this expression into
  \eqref{eq:superlinear-convergence-1} to obtain
  \[
    \norm{x_{k+1} - x^\star}
       \le \frac{L_2}{m} \norm{x_k^{\rm nt}-x^\star}^2
           + o(\norm{x_k-x^\star}),
  \]
  and we deduce that $x_k$ converges to $x^\star$ superlinearly.
\end{proof}

\subsection{Local convergence of the inexact proximal Newton method}
\label{sec:convergence-inexact-proxnewton}
Because subproblem \eqref{eq:proxnewton-search-dir-1} is rarely
solved exactly, we now analyze the adaptive stopping criterion
\eqref{eq:adaptive-stopping-condition}:
\[
  \|G_{\hat{f}_k/M}(x_k + \Delta x_k)\|  \le \eta_k\norm{G_{f/M}(x_k)}.
\]
We show that the inexact proximal Newton method with unit step length (i)
converges $q$-linearly if the forcing terms $\eta_k$ are smaller than
some $\bar{\eta}$, and (ii) converges $q$-superlinearly if the forcing
terms decay to zero.

As before, we assume (i) $g$ is twice-continuously differentiable and
strongly convex with constant $m$, and (ii) $g$ and $\nabla^2 g$ are
Lipschitz continuous with constants $L_1$ and $L_2$. We also assume
(iii) $x_k$ is close to $x^\star$, and (iv) the unit step length is
eventually accepted. These are the assumptions made by Dembo et al.\
and Eisenstat and Walker \cite{dembo1982inexact,eisenstat1996choosing}
in their analysis of \emph{inexact Newton methods} for minimizing
smooth functions. 

First, we prove two auxiliary results that show (i) $G_{\hat{f}_k}$ is
a good approximation to $G_f$, and (ii) $G_{\hat{f}_k}$ inherits the
strong monotonicity of $\nabla\hat{g}$.

\begin{lemma}
\label{lem:G-newton-approx}
We have
  \(
    \|G_f(x) - G_{\hat{f}_k}(x)\| \le \frac{L_2}{2}\norm{x - x_k}^2.
  \)
\end{lemma}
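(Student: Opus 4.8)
The plan is to write both composite gradient steps explicitly through the unscaled proximal mapping of $h$ and then exploit nonexpansiveness. Recall that with unit step length the composite gradient step on $f$ (cf.\ \eqref{eq:composite-gradient-step} with $t=1$) is
\[
  G_f(x) = x - \prox_h\!\big(x - \nabla g(x)\big),
\]
and that $\hat f_k = \hat g_k + h$ with smooth part $\hat g_k(y) = g(x_k) + \nabla g(x_k)^T(y-x_k) + \frac12(y-x_k)^T\nabla^2 g(x_k)(y-x_k)$, so the composite gradient step on $\hat f_k$ is
\[
  G_{\hat f_k}(x) = x - \prox_h\!\big(x - \nabla \hat g_k(x)\big), \qquad \nabla \hat g_k(x) = \nabla g(x_k) + \nabla^2 g(x_k)(x - x_k).
\]
Subtracting, the $x$ terms cancel and we obtain
\[
  G_f(x) - G_{\hat f_k}(x) = \prox_h\!\big(x - \nabla\hat g_k(x)\big) - \prox_h\!\big(x - \nabla g(x)\big).
\]

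Next I would invoke that the (unscaled) proximal mapping $\prox_h$ is firmly nonexpansive, hence $1$-Lipschitz; this is the $H = I$ case of property~3 of scaled proximal mappings listed after Definition~\ref{def:scaled-prox}. Applying it to the displayed difference gives
\[
  \|G_f(x) - G_{\hat f_k}(x)\| \le \big\|\nabla\hat g_k(x) - \nabla g(x)\big\| = \big\|\nabla g(x_k) + \nabla^2 g(x_k)(x - x_k) - \nabla g(x)\big\|.
\]

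Finally I would bound the right-hand side by the standard Taylor remainder estimate: writing $\nabla g(x) - \nabla g(x_k) = \int_0^1 \nabla^2 g\big(x_k + s(x-x_k)\big)(x-x_k)\,ds$ and subtracting $\nabla^2 g(x_k)(x-x_k)$, the Lipschitz continuity of $\nabla^2 g$ with constant $L_2$ yields
\[
  \big\|\nabla g(x) - \nabla g(x_k) - \nabla^2 g(x_k)(x-x_k)\big\| \le \int_0^1 \big\|\nabla^2 g\big(x_k + s(x-x_k)\big) - \nabla^2 g(x_k)\big\|\,\|x-x_k\|\,ds \le \frac{L_2}{2}\|x-x_k\|^2,
\]
which combined with the previous inequality gives the claim. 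There is no substantive obstacle here: the only care needed is in matching the definitions (unit step length, and $\nabla\hat g_k$ being the gradient of the quadratic model that defines the proximal Newton subproblem); after that the result is just nonexpansiveness composed with a one-line Taylor bound.
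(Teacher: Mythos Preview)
Your proposal is correct and matches the paper's proof essentially line for line: the paper also writes $G_f(x)-G_{\hat f_k}(x)$ as a difference of $\prox_h$ evaluations, applies nonexpansiveness of $\prox_h$ to bound it by $\|\nabla g(x)-\nabla\hat g_k(x)\|$, and then invokes Lipschitz continuity of $\nabla^2 g$ for the $\frac{L_2}{2}\|x-x_k\|^2$ Taylor bound. Your version is slightly more explicit in writing out the integral remainder, but there is no substantive difference.
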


\begin{proof}
The proximal mapping is non-expansive:
\begin{align*}
   \|G_f(x) - G_{\hat{f}_k}(x)\|
      &\le \norm{\prox_h(x - \nabla g(x)) - \prox_h(x - \nabla\hat{g}_k(x))}
\le \norm{\nabla g(x) - \nabla\hat{g}_k(x)}.
\end{align*}
Since $\nabla g(x)$ and $\nabla^2 g(x_k)$ are Lipschitz continuous,
\[
  \norm{\nabla g(x) - \nabla\hat{g}_k(x)}
     \le \norm{\nabla g(x) - \nabla g(x_k) - \nabla^2 g(x_k)(x - x_k)}
     \le \frac{L_2}{2}\norm{x - x_k}^2.
\]
Combining the two inequalities gives the desired result.
\end{proof}

The proof of the next result mimics the analysis of Byrd et al. \cite{byrd2013inexact}. 

\begin{lemma}
\label{lem:G-strongly-monotone}
$G_{tf}(x)$ with $t \le \frac{1}{L_1}$
is strongly monotone with constant $\frac{m}{2}$, \ie,
\begin{equation}
  (x-y)^T(G_{tf}(x) - G_{tf}(y)) \ge
     \frac{m}{2}\norm{x - y}^2\text{ for }t\le\frac{1}{L_1}.
\label{eq:G-strongly-monotone-1}
\end{equation}
\end{lemma}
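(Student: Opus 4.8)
The plan is to imitate the argument used in Lemma~\ref{lem:G-lipschitz}, exploiting the representation of the composite gradient step in terms of an explicit gradient and an implicit subgradient. First I would write the first-order optimality characterizations: from property~2 of the composite gradient step, $G_{tf}(x) \in \nabla g(x) + \partial h(x - tG_{tf}(x))$, and likewise $G_{tf}(y) \in \nabla g(y) + \partial h(y - tG_{tf}(y))$. Subtracting, I obtain that $\bigl(G_{tf}(x) - G_{tf}(y)\bigr) - \bigl(\nabla g(x) - \nabla g(y)\bigr)$ lies in $\partial h(x - tG_{tf}(x)) - \partial h(y - tG_{tf}(y))$.

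Next I would apply monotonicity of $\partial h$: pairing the subgradient difference with the difference of the points $x - tG_{tf}(x)$ and $y - tG_{tf}(y)$ gives a nonnegative quantity. Expanding,
\begin{align*}
  0 &\le \bigl((x-y) - t(G_{tf}(x)-G_{tf}(y))\bigr)^T \bigl((G_{tf}(x)-G_{tf}(y)) - (\nabla g(x)-\nabla g(y))\bigr).
\end{align*}
Rearranging, this yields $(x-y)^T(G_{tf}(x)-G_{tf}(y))$ bounded below by terms involving $t\norm{G_{tf}(x)-G_{tf}(y)}^2$, the cross term $(x-y)^T(\nabla g(x)-\nabla g(y))$, and $-t(G_{tf}(x)-G_{tf}(y))^T(\nabla g(x)-\nabla g(y))$. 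The strong convexity of $g$ gives $(x-y)^T(\nabla g(x)-\nabla g(y)) \ge m\norm{x-y}^2$, and I would use Lipschitz continuity of $\nabla g$ together with $t \le 1/L_1$ to control the remaining cross term: $\norm{\nabla g(x) - \nabla g(y)} \le L_1\norm{x-y}$, so $t\norm{\nabla g(x)-\nabla g(y)} \le \norm{x-y}$.

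The main obstacle is handling the indefinite-sign term $-t(G_{tf}(x)-G_{tf}(y))^T(\nabla g(x)-\nabla g(y))$ and combining it with $+t\norm{G_{tf}(x)-G_{tf}(y)}^2$ to leave something that does not hurt the bound. I expect to split off half of $m\norm{x-y}^2$ via Young's inequality against the cross term (using $t\|\nabla g(x)-\nabla g(y)\| \le \|x-y\|$), absorbing the $G_{tf}$ contributions into the nonnegative quadratic $t\norm{G_{tf}(x)-G_{tf}(y)}^2$, thereby retaining $\frac{m}{2}\norm{x-y}^2$ — which is exactly the claimed constant and explains why it is $m/2$ rather than $m$. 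Once the algebra is arranged so that all $G_{tf}$-dependent terms form a complete square (or are otherwise nonnegative) for every $t \le 1/L_1$, inequality~\eqref{eq:G-strongly-monotone-1} follows directly.
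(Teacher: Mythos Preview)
Your overall strategy---using the subgradient characterization $G_{tf}(x)\in\nabla g(x)+\partial h(x-tG_{tf}(x))$ together with monotonicity of $\partial h$---is sound and leads to the key inequality
\[
(x-y)^T\bigl(G_{tf}(x)-G_{tf}(y)\bigr)\ \ge\ (x-y)^T(\nabla g(x)-\nabla g(y))\;+\;t\norm{\Delta G}^2\;-\;t\,\Delta G^T\Delta g,
\]
where $\Delta G:=G_{tf}(x)-G_{tf}(y)$ and $\Delta g:=\nabla g(x)-\nabla g(y)$. The gap is in how you propose to close it: the Lipschitz bound $\norm{\Delta g}\le L_1\norm{x-y}$ (equivalently $t\norm{\Delta g}\le\norm{x-y}$) is not strong enough. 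If you run any Young/Cauchy--Schwarz splitting using only that bound, the best you can extract is something like $(m-\tfrac{L_1}{4})\norm{x-y}^2$, since absorbing the cross term into $t\norm{\Delta G}^2$ forces you to pay a penalty of order $\tfrac{1}{t}\norm{x-y}^2\ge\tfrac{L_1}{4}\norm{x-y}^2$ on the other side. For $L_1\gg m$ this is useless, so the constant $m/2$ cannot come from Lipschitz continuity alone.

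What is missing is \emph{cocoercivity} of $\nabla g$ (Baillon--Haddad): since $g$ is convex with $L_1$-Lipschitz gradient, $(x-y)^T\Delta g\ge\tfrac{1}{L_1}\norm{\Delta g}^2$. Averaging this with strong convexity $(x-y)^T\Delta g\ge m\norm{x-y}^2$ gives
\[
(x-y)^T\Delta g\ \ge\ \tfrac{m}{2}\norm{x-y}^2+\tfrac{1}{2L_1}\norm{\Delta g}^2\ \ge\ \tfrac{m}{2}\norm{x-y}^2+\tfrac{t}{2}\norm{\Delta g}^2,
\]
and then $\tfrac{t}{2}\norm{\Delta g}^2+t\norm{\Delta G}^2-t\,\Delta G^T\Delta g=t\norm{\Delta G-\tfrac12\Delta g}^2+\tfrac{t}{4}\norm{\Delta g}^2\ge 0$ completes the square cleanly, yielding exactly $\tfrac{m}{2}\norm{x-y}^2$. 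With this one correction your argument goes through and is in fact more elementary than the paper's: the paper instead applies Moreau's decomposition to write $G_{tf}(x)=\nabla g(x)+\tfrac1t\prox_{(th)^*}(x-t\nabla g(x))$, introduces a rank-one matrix $W$ built from the difference of the conjugate-prox terms, invokes the mean value theorem on $\nabla g$ (using $C^2$ smoothness), and then proves positive semidefiniteness of $H(\alpha)-\tfrac12(WH(\alpha)+H(\alpha)W)+\tfrac1t W$ via firm nonexpansiveness and an eigenvalue estimate. Your route avoids the matrix machinery and the second-derivative mean value theorem, at the cost of needing cocoercivity explicitly.
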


\begin{proof}
The composite gradient step on $f$ has the form
\begin{align*}
  G_{tf}(x) = \frac{1}{t}(x - \prox_{th}(x - t\nabla g(x)))
\end{align*}
(\cf{} \eqref{eq:composite-gradient-step}). We decompose
$\prox_{th}(x - t\nabla g(x))$ (by Moreau's decomposition) to obtain
\[
  G_{tf}(x) = \nabla g(x) + \frac{1}{t}\prox_{(th)^*}(x - t\nabla g(x)).
\]
Thus $G_{tf}(x) - G_{tf}(y)$ has the form
\begin{align}
   &G_{tf}(x) - G_{tf}(y)   \nonumber
\\ &\pc= \nabla g(x) - \nabla g(y) + \frac{1}{t}
       \left(\prox_{(th)^*}(x - t\nabla g(x)) - \prox_{(th)^*}(y - t\nabla g(y))\right).
\label{eq:G-strongly-monotone-2}
\end{align}
Let $w=\prox_{(th)^*}(x - t\nabla g(x)) - \prox_{(th)^*}(y - t\nabla
g(y))$ and
\[
  d = x - t\nabla g(x) - (y - t\nabla g(y))
    = (x - y) - t(\nabla g(x) - t\nabla g(y)).
\]
We express \eqref{eq:G-strongly-monotone-2} in terms of $W =
\frac{ww^T}{w^Td}$ to obtain
\[
  G_{tf}(x) - G_{tf}(y) = \nabla g(x) - \nabla g(y) + \frac{w}{t}
    = \nabla g(x) - \nabla g(y) + \frac1tWd.
\]
We multiply by $x-y$ to obtain
\begin{align}
   &(x-y)^T(G_{tf}(x) - G_{tf}(y))  \nonumber
\\ &\pc =(x-y)^T (\nabla g(x) - \nabla g(y)) + \frac1t(x-y)^T Wd \nonumber
\\ &\pc =(x-y)^T (\nabla g(x) - \nabla g(y)) + \frac1t(x-y)^T
         W(x - y - t(\nabla g(x) - \nabla g(y))) 
\label{eq:G-strongly-monotone-3}
\end{align}
Let $H(\alpha)= \nabla^2g (x+\alpha (x-y))$. By the mean value
theorem, we have
\begin{align}
   &(x-y)^T(G_{tf}(x) - G_{tf}(y))     \nonumber
\\ &\pc = \int_{0}^{1} (x-y)^T \left( H(\alpha) -WH(\alpha)
          + \frac{1}{t} W\right) (x-y)\ d\alpha   \nonumber
\\ &\pc = \int_{0}^{1} (x-y)^T \left( H(\alpha) -
             \frac{1}{2} (WH(\alpha) +H(\alpha) W) +\frac{1}{t} W \right) (x-y)\ d\alpha
\label{eq:G-strongly-monotone-4}
\end{align}
To show \eqref{eq:G-strongly-monotone-1}, we must show that $H(\alpha)
+ \frac1t W -\frac12(WH(\alpha) + H(\alpha)W)$ is positive definite
for $t \le\frac{1}{L_1}$.  We rearrange $(\sqrt{t} H(\alpha) -
\frac{1}{\sqrt{t}} W)(\sqrt{t} H(\alpha) - \frac{1}{\sqrt{t}}
W)\succeq 0$ to obtain
\[
  t H(\alpha)^2 +\frac{1}{t} W^2 \succeq WH(\alpha) +H(\alpha)W,
\]
and we substitute this expression into
\eqref{eq:G-strongly-monotone-4} to obtain
\begin{align*}
   &(x-y)^T(G_{tf}(x) - G_{tf}(y))
\\ &\pc\ge \int_{0}^{1} (x-y)^T \left(
       H(\alpha) - \frac{t}{2} H(\alpha)^2
                 + \frac{1}{t} \bigl(W - \frac{1}{2} W^2 \bigr)
                                \right)
       (x-y)\ d\alpha.
\end{align*}
Since $\prox_{(th)^*}$ is firmly non-expansive, we have $\norm{w}^2
\le d^Tw$ and
\[
  W = \frac{ww^T}{w^Td} = \frac{\norm{w}^2}{w^Td}\frac{ww^T}{\norm{w}^2}\preceq I.
\]
Since $W$ is positive semidefinite and $W\preceq I$, $W - W^2$ is positive
semidefinite and
\[
  (x-y)^T(G_{tf}(x) - G_{tf}(y)) \ge \int_{0}^{1} (x-y)^T \left(
  H(\alpha) -\frac{t}{2} H(\alpha)^2 \right) (x-y)\ d\alpha.
\]
If we set $t \le \frac{1}{L_1}$, the eigenvalues of $H(\alpha) -
\frac{t}{2}H(\alpha)^2$ are
\[
  \lambda_i(\alpha) - \frac{t}{2}\lambda_i(\alpha)^2 \ge
  \lambda_i(\alpha) - \frac{\lambda_i(\alpha)^2}{2L_1} \ge
  \frac{\lambda_i(\alpha)}{2} > \frac{m}{2},
\]
where $\lambda_i(\alpha),i=1,\dots,n$ are the eigenvalues of $H(\alpha)$.
We deduce that
\[
  (x-y)^T(G_{tf}(x) - G_{tf}(y)) \ge \frac{m}{2} \norm{x-y}^2.
\]
\end{proof}

We use these two results to show that the inexact proximal Newton
method with unit step lengths converges locally linearly or superlinearly depending on the forcing terms.

\begin{theorem}
\label{thm:inexact-newton-linear-convergence}
  Suppose
  $x_0$ is sufficiently close to $x^\star$.
  \begin{enumerate}
  \item If $\eta_k$ is smaller than some $\bar{\eta} < \frac{m}{2}$, an
    inexact proximal Newton method with unit step lengths converges $q$-linearly to $x^\star$.

  \item If $\eta_k$ decays to zero, an inexact proximal Newton method with unit step lengths
    converges $q$-super\-linearly to $x^\star$.
  \end{enumerate}
\end{theorem}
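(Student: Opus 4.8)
The plan is to adapt the classical convergence analysis of inexact Newton methods, using the composite gradient step $G_{f/M}$ in place of the gradient. Since unit step lengths are used, $x_{k+1} = x_k + \Delta x_k$, so it suffices to show $\norm{x_{k+1} - x^\star} \le c_k\norm{x_k - x^\star}$ with $c_k$ bounded below $1$ (and $c_k \to 0$ in the superlinear case). The backbone is Lemma~\ref{lem:G-strongly-monotone}: because the inexact proximal Newton method takes $H_k = \nabla^2 g(x_k)$, we have $\nabla^2 g \preceq MI$ and hence may take $L_1 \le M$, so the step length $1/M$ obeys $1/M \le 1/L_1$ and $G_{f/M}$ is strongly monotone with constant $m/2$. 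Combined with $G_{f/M}(x^\star) = 0$ (the composite gradient step vanishes at a minimizer, for any step length) and the Cauchy--Schwarz inequality, this gives $\norm{x - x^\star} \le \frac{2}{m}\norm{G_{f/M}(x)}$ for every $x$.

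First I would apply this bound at $x_{k+1}$ and split $\norm{G_{f/M}(x_{k+1})}$ by the triangle inequality into a model-error term and a stopping-residual term:
\[
  \norm{x_{k+1} - x^\star} \le \frac{2}{m}\norm{G_{f/M}(x_{k+1})}
    \le \frac{2}{m}\bigl(\norm{G_{f/M}(x_{k+1}) - G_{\hat{f}_k/M}(x_{k+1})} + \norm{G_{\hat{f}_k/M}(x_{k+1})}\bigr).
\]
The first term is at most $\frac{L_2}{2}\norm{\Delta x_k}^2$ by the argument of Lemma~\ref{lem:G-newton-approx} applied to the $1/M$-scaled composite gradient steps (non-expansiveness of $\prox_{h/M}$ together with Lipschitz continuity of $\nabla^2 g$). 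The second term is exactly what the adaptive stopping condition \eqref{eq:adaptive-stopping-condition} controls, hence at most $\eta_k\norm{G_{f/M}(x_k)}$. Therefore
\[
  \norm{x_{k+1} - x^\star} \le \frac{2}{m}\bigl(\tfrac{L_2}{2}\norm{\Delta x_k}^2 + \eta_k\norm{G_{f/M}(x_k)}\bigr).
\]

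To turn this into a contraction I would (i) bound $\norm{G_{f/M}(x_k)} \le (M + L_1)\norm{x_k - x^\star}$ by the monotonicity argument of Lemma~\ref{lem:G-lipschitz} applied to the $1/M$-scaled step, and (ii) use $\norm{\Delta x_k} \le \norm{x_{k+1} - x^\star} + \norm{x_k - x^\star}$, so the quadratic term is of higher order near $x^\star$. The remaining step is a bootstrapping induction: choose a ball $B$ about $x^\star$ small enough that the quadratic term is dominated on $B$; assuming $x_k \in B$ and, as an induction hypothesis, $\norm{x_{k+1} - x^\star} \le \norm{x_k - x^\star}$, replace $\norm{\Delta x_k}$ by $2\norm{x_k - x^\star}$ to get $\norm{x_{k+1} - x^\star} \le \bigl(\frac{4L_2}{m}\norm{x_k - x^\star} + \frac{2(M+L_1)}{m}\eta_k\bigr)\norm{x_k - x^\star}$. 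For part~1, taking $\eta_k \le \bar{\eta}$ below the forcing-rule cap $m/2$ (cf.\ \eqref{eq:forcing-term}) and $B$ small makes the bracket strictly less than $1$, which simultaneously closes the induction ($x_{k+1}\in B$) and establishes $q$-linear convergence. For part~2, part~1 already gives $x_k \to x^\star$ $q$-linearly once $\eta_k < \bar{\eta}$, so the same bracket applies; since now both $\norm{x_k - x^\star}\to 0$ and $\eta_k \to 0$, the bracket tends to $0$ and convergence is $q$-superlinear.

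The main obstacle is the self-reference: $\norm{x_{k+1} - x^\star}$ reappears on the right-hand side through $\norm{\Delta x_k}$, which is precisely what forces the bootstrapping argument and the careful choice of the neighborhood $B$. Once the iterates are confined to $B$ the argument is routine, with $\frac{L_2}{2}\norm{\Delta x_k}^2$ playing the role of the second-order Taylor remainder in the smooth inexact-Newton analysis and $\eta_k$ playing the role of the forcing term.
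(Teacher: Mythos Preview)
Your argument is sound in structure but uses a different decomposition than the paper, and that difference is exactly what creates the self-reference you flag as the main obstacle. You apply Lemma~\ref{lem:G-strongly-monotone} to $G_{f/M}$ at the single point $x_{k+1}$ (using $G_{f/M}(x^\star)=0$), then split $\norm{G_{f/M}(x_{k+1})}$ into a model-error term at $x_{k+1}$ and the stopping residual. The paper instead applies Lemma~\ref{lem:G-strongly-monotone} to the \emph{model's} composite gradient $G_{\hat f_k/L_1}$ at the pair $(x_{k+1},x^\star)$, obtaining
\[
\norm{x_{k+1}-x^\star}\le \frac{2}{m}\,\bigl\|G_{\hat f_k/L_1}(x_{k+1})-G_{\hat f_k/L_1}(x^\star)\bigr\|
\le \frac{2}{m}\bigl(\norm{G_{\hat f_k/L_1}(x_{k+1})}+\norm{G_{\hat f_k/L_1}(x^\star)}\bigr).
\]
The first term is already the stopping residual $\le\eta_k\norm{G_{f/L_1}(x_k)}$; the second is the model error evaluated at $x^\star$, so Lemma~\ref{lem:G-newton-approx} bounds it by $\tfrac{L_2}{2L_1}\norm{x_k-x^\star}^2$ with no $\Delta x_k$ anywhere on the right. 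The bootstrapping step simply does not arise.

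Beyond economy, the paper's route also gives a sharper coefficient on the forcing term. In your final display the bracket is $\frac{4L_2}{m}\norm{x_k-x^\star}+\frac{2(M+L_1)}{m}\eta_k$; taking $\eta_k<\tfrac{m}{2}$ only forces the second summand below $M+L_1$, not below $1$, so your argument as written would need the stronger hypothesis $\bar\eta<\tfrac{m}{2(M+L_1)}$ to establish part~1. The extra Lipschitz factor enters precisely because you bound $\norm{G_{f/M}(x_k)}$ by $(M+L_1)\norm{x_k-x^\star}$ \emph{after} already spending the $2/m$ from strong monotonicity; the paper keeps the forcing contribution as $\frac{2}{m}\eta_k\norm{G_{f/L_1}(x_k)}$ until the last step, which is where the threshold $m/2$ in the statement comes from. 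Part~2 is unaffected: once $\eta_k\to 0$ your bracket tends to zero regardless of the constant, so superlinear convergence goes through either way.
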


\begin{proof}
  The local model $\hat{f}_k$ is strongly convex with constant $m$.
  According to Lemma \ref{lem:G-strongly-monotone},
  $G_{\hat{f}_k/L_1}$ is strongly monotone with constant
  $\frac{m}{2}$:
  \[
  (x-y)^T\left(G_{\hat{f}_k/L_1}(x) - G_{\hat{f}_k/L_1}(y)\right) \ge \frac{m}{2}\norm{x-y}^2.
  \]
  By the Cauchy-Schwarz inequality, we have
  \[
    \|G_{\hat{f}_k/L_1}(x) - G_{\hat{f}_k/L_1}(y)\| \ge \frac{m}{2}\norm{x-y}.
  \]
  We apply this result to $x_k +\Delta x_k$ and $x^\star$ to obtain
  \begin{equation}
    \norm{x_{k+1} - x^\star} =\norm{x_k + \Delta x_k - x^\star}
       \le \frac{2}{m}\|G_{\hat{f}_k/L_1}(x_k
           + \Delta x_k) - G_{\hat{f}_k/L_1}(x^\star)\|.
    \label{eq:inexact-newton-linear-convergence-0}
  \end{equation}
  Let $r_k$ be the residual $- G_{\hat{f}_k./L_1}(x_k + \Delta x_k)$.
  The adaptive stopping condition
  \eqref{eq:adaptive-stopping-condition} requires $\norm{r_k} \le
  \eta_k\|G_{f/L_1}(x_k)\|$. We substitute this expression into
  \eqref{eq:inexact-newton-linear-convergence-0} to obtain
  \begin{align}
     \norm{x_{k+1} - x^\star}
        &\le \frac{2}{m}\|-G_{\hat{f}_k/L_1}(x^\star) -r_k\|         \nonumber
  \\    &\le \frac{2}{m}(\|G_{\hat{f}_k/L_1}(x^\star)\|+\norm{r_k})  \nonumber
  \\    &\le \frac{2}{m} (\|G_{\hat{f}_k/L_1}(x^\star)\|+\eta_k\norm{G_{f/L_1}(x_k)}.
  \label{eq:inexact-newton-linear-convergence-1}
  \end{align}
Applying Lemma \ref{lem:G-newton-approx} to $f/L_1$ and $\hat f_k
  /L_1$ gives
\[
\|G_{\hat{f}_k/L_1}(x^\star)\|
  \le \frac12\frac{L_2}{L_1}\norm{x_k - x^\star}^2+\norm{
    G_{f/L_1}(x^\star)}=\frac12\frac{L_2}{L_1}\norm{x_k -
    x^\star}^2.
\]
 We substitute this bound into
  \eqref{eq:inexact-newton-linear-convergence-1} to obtain
  \begin{align*}
    \norm{x_{k+1} - x^\star} &\le \frac{2}{m} \left(
                                \frac{L_2}{2L_1}\norm{x_k - x^\star}^2
                                + \eta_k\norm{G_{f/L_1}(x_k)}
                                             \right)
  \\                         &\le \frac{L_2}{mL_1}
                                \norm{x_k - x^\star}^2
                              + \frac{2\eta_k}{m}\norm{x_k - x^\star}.
\end{align*}
We deduce that (i) $x_k$ converges $q$-linearly to $x^\star$ if $x_0$ is
sufficiently close to $x^\star$ and $\eta_k \le \bar{\eta}$ for some
$\bar{\eta} < \frac{m}{2}$, and (ii) $x_k$ converges $q$-superlinearly
to $x^\star$ if $x_0$ is sufficiently close to $x^\star$ and $\eta_k$
decays to zero.
\end{proof}

Finally, we justify our choice of forcing terms: if we choose $\eta_k$
according to \eqref{eq:forcing-term}, then the inexact proximal Newton
method converges $q$-superlinearly. When minimizing smooth functions, we recover the result of Eisenstat and Walker on choosing forcing terms in an inexact Newton method \cite{eisenstat1996choosing}.
\begin{theorem}
  \label{thm:forcing-term-1}
  Suppose
  $x_0$ is sufficiently close to $x^\star$. If we choose $\eta_k$
  according to \eqref{eq:forcing-term}, then the inexact proximal
  Newton method with unit step lengths converges $q$-superlinearly.
\end{theorem}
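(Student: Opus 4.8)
The plan is a two-stage argument: first show that the forcing terms produced by \eqref{eq:forcing-term} tend to $0$, and then conclude via the superlinear branch of Theorem \ref{thm:inexact-newton-linear-convergence}. The obstruction is that the two facts are entangled --- ``$\eta_k\to 0$'' needs $x_k\to x^\star$, while $q$-linear convergence of $\{x_k\}$ needs the $\eta_k$ to stay below $\frac{m}{2}$ by a definite margin --- so I would prove both simultaneously by induction on $k$, assuming throughout that $x_0$ (hence every later iterate) is sufficiently close to $x^\star$ and that $\eta_0$ is safeguarded strictly below $\frac{m}{2}$. Note also that in this section $H_k=\nabla^2 g(x_k)\preceq L_1 I$, so we may take $M=L_1$; I will use $M$ throughout, keeping in mind that $M=L_1$ here.

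The core estimate bounds the second argument of the min in \eqref{eq:forcing-term}. For the numerator, Lemma \ref{lem:G-newton-approx} applied to $f/M$ and $\hat f_{k-1}/M$ gives $\|G_{\hat f_{k-1}/M}(x_k)-G_{f/M}(x_k)\| \le \frac{L_2}{2M}\|x_k-x_{k-1}\|^2$. For the denominator, $G_{f/M}(x^\star)=0$ by the optimality characterization of the composite gradient step, and since $M=L_1$, Lemma \ref{lem:G-strongly-monotone} applies at $t=1/M$ and yields, via Cauchy--Schwarz, $\|G_{f/M}(x_{k-1})\| \ge \frac{m}{2}\|x_{k-1}-x^\star\|$. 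Combining,
\[
  \eta_k \;\le\; \frac{L_2}{mM}\,\frac{\|x_k-x_{k-1}\|^2}{\|x_{k-1}-x^\star\|}.
\]

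For the induction, suppose $\|x_j-x^\star\|\le\frac12\|x_{j-1}-x^\star\|$ for all $j\le k$. Then $\|x_k-x_{k-1}\|\le\frac32\|x_{k-1}-x^\star\|$, so the displayed bound gives $\eta_k\le\frac{9L_2}{4mM}\|x_{k-1}-x^\star\|$, which lies below $\frac{m}{2}$ --- indeed below, say, $\frac{m}{8}$ --- once $\|x_0-x^\star\|$ (and hence $\|x_{k-1}-x^\star\|$) is smaller than an explicit threshold depending only on $m,M,L_2$. Substituting this into the recursion derived in the proof of Theorem \ref{thm:inexact-newton-linear-convergence},
\[
  \|x_{k+1}-x^\star\| \;\le\; \frac{L_2}{mM}\|x_k-x^\star\|^2 + \frac{2\eta_k}{m}\|x_k-x^\star\|,
\]
one checks that the coefficient multiplying $\|x_k-x^\star\|$ is at most $\frac12$ for $\|x_0-x^\star\|$ small, which closes the induction and establishes $q$-linear convergence; in particular $x_k\to x^\star$.

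It remains to upgrade the rate. Since $x_k\to x^\star$, we get $\eta_k\le\frac{9L_2}{4mM}\|x_{k-1}-x^\star\|\to 0$, so the forcing terms decay to zero, and the last display yields $\|x_{k+1}-x^\star\|/\|x_k-x^\star\|\le\frac{L_2}{mM}\|x_k-x^\star\|+\frac{2\eta_k}{m}\to 0$; equivalently, one simply invokes Theorem \ref{thm:inexact-newton-linear-convergence}(ii). I expect the only real work to be the bookkeeping in the induction --- coupling ``$\eta_k\to 0$'' with ``$x_k\to x^\star$'' and choosing the neighborhood of $x^\star$ so that every constant behaves --- together with keeping the roles of $m$, $M$, and $L_1$ consistent so that Lemma \ref{lem:G-strongly-monotone} is legitimately applied at $t=1/M$.
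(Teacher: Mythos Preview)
Your proposal is correct and follows essentially the same route as the paper: bound the numerator of the forcing-term ratio via Lemma~\ref{lem:G-newton-approx} (applied to $f/M$ and $\hat f_{k-1}/M$), bound the denominator from below via Lemma~\ref{lem:G-strongly-monotone} and Cauchy--Schwarz, and then invoke Theorem~\ref{thm:inexact-newton-linear-convergence}. The only real difference is one of rigor: the paper simply asserts, ``By Theorem~\ref{thm:inexact-newton-linear-convergence}, $\|x_k-x^\star\|/\|x_{k-1}-x^\star\|<1$'' and proceeds, whereas you correctly recognize that this invocation is entangled with the very fact you are trying to establish (since Theorem~\ref{thm:inexact-newton-linear-convergence} requires $\eta_k<\bar\eta<\tfrac{m}{2}$, while the safeguard in \eqref{eq:forcing-term} only guarantees $\eta_k\le\tfrac{m}{2}$). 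Your explicit induction, which couples ``$\eta_k$ small'' with ``$\|x_k-x^\star\|$ contracts'', patches this cleanly and is the more careful argument; it is what the paper's proof implicitly relies on but does not spell out.
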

\begin{proof}
To show superlinear convergence, we must show
\begin{align}
\frac{\|G_{\hat{f}_{k-1}/L_1}(x_k)-G_{f/L_1}(x_k)\|}{\norm{G_{f/L_1}(x_{k-1})}}\to 0
\label{eq:eta-0}.
\end{align}
By Lemma \ref{lem:G-lipschitz}, we have 
\begin{align*}
   \|G_{\hat{f}_{k-1}/L_1}(x_k)-G_{f/L_1}(x_k)\|
      &\le \frac12\frac{L_2}{L_1} \norm{x_k-x_{k-1}}^2
\\    &\le \frac12\frac{L_2}{L_1}
           \left(\norm{x_k - x^\star} +\norm{x^\star - x_{k-1}}\right)^2.
\end{align*}
By Lemma \ref{lem:G-strongly-monotone}, we also have 
\[
  \norm{G_{f/L_1}(x_{k-1})} = \norm{G_{f/L_1} (x_{k-1}) - G_{f/L_1}
    (x^\star)} \ge \frac{m}{2} \norm{x_{k-1} - x^\star}.
\]
We substitute these expressions into \eqref{eq:eta-0} to obtain
\begin{align*}
    &\frac{\|G_{\hat{f}_{k-1}/L_1}(x_k)-G_{f/L_1}(x_k)\|}
          {\norm{G_{f/L_1}(x_{k-1})}}
\\  &\pc\le \frac{\frac12\frac{L_2}{L_1}
                  \left(\norm{x_k - x^\star} + \norm{x^\star - x_{k-1}}\right)^2}
                 {\frac{m}{2} \norm{x_{k-1} - x^\star}}
\\  &\pc=   \frac{1}{m}\frac{L_2}{L_1} 
            \frac{\norm{x_k - x^\star} + \norm{x_{k-1}-x^\star}}
                 {\norm{x_{k-1} -x^\star}}
                 \left( \norm{x_k - x^\star} +\norm{x_{k-1}-x^\star}\right)
\\ &\pc=    \frac{1}{m}\frac{L_2}{L_1}
            \left(1 + \frac{\norm{x_k - x^\star}}{\norm{x_{k-1} -x ^\star}} \right)
            \left(\norm{x_k - x^\star} + \norm{x_{k-1}-x^\star}\right).
\end{align*}
By Theorem \ref{thm:inexact-newton-linear-convergence}, we have
$\frac{\norm{x_k - x^\star}}{\norm{x_{k-1} -x ^\star}} < 1$ and
\[
  \frac{\|G_{\hat{f}_{k-1}/M}(x_k)-G_{f/M}(x_k)\|}
       {\norm{G_{f/M}(x_{k-1})}}
  \le \frac{2}{m} \frac{L_2}{M}
      \left( \norm{x_k - x^\star} + \norm{x_{k-1}-x^\star}\right).
\]
We deduce (with Theorem \ref{thm:inexact-newton-linear-convergence})
that the inexact proximal Newton method with adaptive stopping
condition \eqref{eq:adaptive-stopping-condition} converges
$q$-superlinearly.
\end{proof}

\section{Computational experiments}
\label{sec:experiments}

First we explore how inexact search directions affect the convergence
behavior of proximal Newton-type methods on a problem in
bioinfomatics. We show that choosing the forcing terms according to
\eqref{eq:forcing-term} avoids ``oversolving'' the subproblem.  Then
we demonstrate the performance of proximal Newton-type methods using a
problem in statistical learning. We show that the methods are suited
to problems with expensive smooth function evaluations.

\subsection{Inverse covariance estimation}

Suppose i.i.d.\ samples $x^{(1)},\dots,x^{(m)}$ are from
a Gaussian Markov random field (MRF) with mean zero and unknown inverse covariance
matrix $\bar{\Theta}$:
\[
  \Prob(x;\bar{\Theta}) \propto \exp(x^T\bar{\Theta} x/2 - \logdet(\bar{\Theta})).
\]
We seek a sparse maximum likelihood estimate of the inverse covariance matrix:
\begin{align}
  \hat{\Theta} := \argmin_{\Theta \in \reals^{n\times
	  n}}\,\trace\left(\hat{\Sigma}\Theta\right) - \log\det(\Theta) +
	\lambda\norm{\mathrm{vec}(\Theta)}_1,
  \label{eq:l1-logdet}
\end{align}
where $\hat{\Sigma}$ denotes the sample covariance matrix. We
regularize using an entry-wise $\ell_1$ norm to avoid overfitting the
data and to promote sparse estimates. The parameter $\lambda$ balances
goodness-of-fit and sparsity.

We use two datasets: (i) Estrogen, a gene expression dataset
consisting of 682 probe sets collected from 158 patients, and (ii)
Leukemia, another gene expression dataset consisting of 1255 genes
from 72 patients.\footnote{These datasets are available from
  \url{http://www.math.nus.edu.sg/~mattohkc/} with the SPINCOVSE
  package.} The features of Estrogen were converted to log-scale and
normalized to have zero mean and unit variance. The regularization
parameter $\lambda$ was chosen to match the values used in
\cite{rolfs2012iterative}.

We solve the inverse covariance estimation problem
\eqref{eq:l1-logdet} using a proximal BFGS method, \ie, $H_k$ is
updated according to the BFGS updating formula.
(The proximal Newton method would be
computationally very expensive on these large datasets.)
To explore how inexact search directions affect the convergence
behavior, we use three rules to decide how accurately to solve
subproblem \eqref{eq:proxnewton-search-dir-1}:
\begin{enumerate}
\item adaptive: stop when the adaptive stopping condition
  \eqref{eq:adaptive-stopping-condition} is satisfied;
\item exact: solve the subproblem accurately (``exactly'');
\item stop after 10 iterations.
\end{enumerate} 
We use the TFOCS implementation of FISTA to solve the subproblem. We
plot relative suboptimality versus function evaluations and time on
the Estrogen dataset in Figure \ref{fig:estrogen} and on the Leukemia
dataset in Figure \ref{fig:leukemia}.

\begin{figure}
   \begin{subfigure}{0.5\textwidth}
	\includegraphics[width=\textwidth]{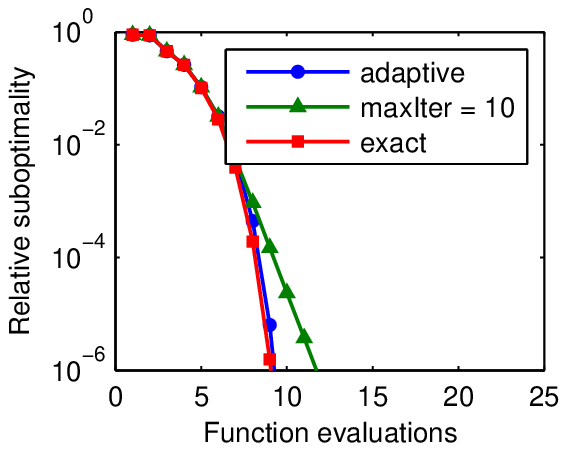}
   \end{subfigure}%
	~
   \begin{subfigure}{0.5\textwidth}
	 \includegraphics[width=\textwidth]{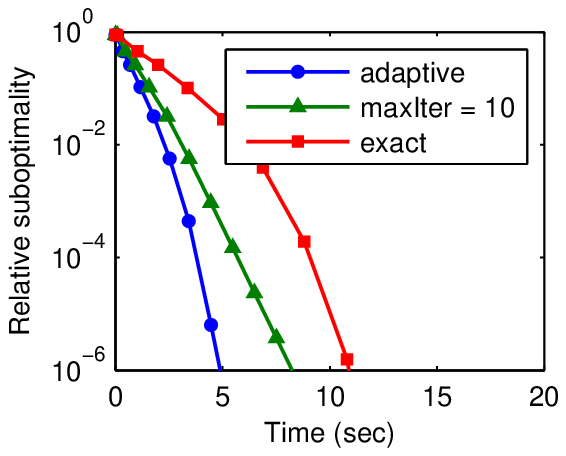}
   \end{subfigure}
   \caption{Inverse covariance estimation problem (Estrogen
     dataset). Convergence behavior of proximal BFGS method with three
     subproblem stopping conditions.}
   \label{fig:estrogen}
\end{figure}

\begin{figure}
   \begin{subfigure}{0.5\textwidth}
	\includegraphics[width=\textwidth]{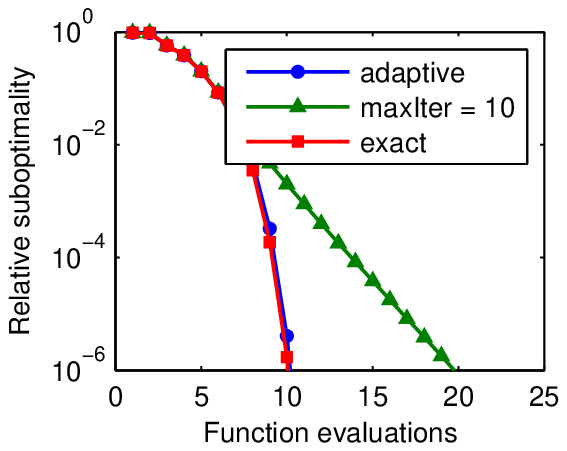}
   \end{subfigure}%
	~
   \begin{subfigure}{0.5\textwidth}
	 \includegraphics[width=\textwidth]{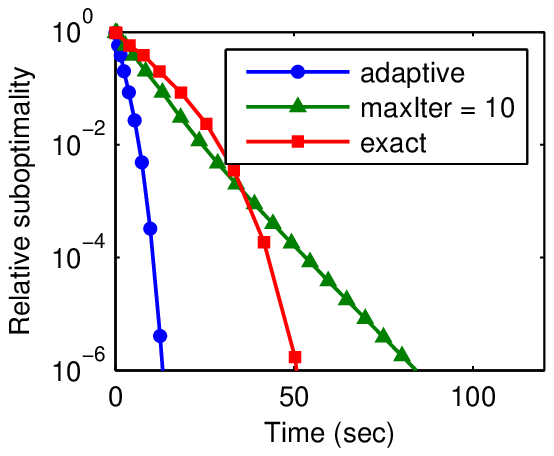}
   \end{subfigure}
   \caption{Inverse covariance estimation problem (Leukemia
     dataset). Convergence behavior of proximal BFGS method with three
     subproblem stopping conditions.}
   \label{fig:leukemia}
\end{figure}
 
Although the conditions for superlinear convergence (cf.\ Theorem
\ref{thm:superlinear-convergence}) are not met ($\log\det$ is not
strongly convex), we empirically observe in Figures \ref{fig:estrogen}
and \ref{fig:leukemia} that a proximal BFGS method transitions from
linear to superlinear convergence. This transition is characteristic
of BFGS and other quasi-Newton methods with superlinear convergence.

On both datasets, the exact stopping condition yields the fastest
convergence (ignoring computational expense per step), followed
closely by the adaptive stopping condition (see Figure
\ref{fig:estrogen} and \ref{fig:leukemia}). If we account for time per
step, then the adaptive stopping condition yields the fastest
convergence.  Note that the adaptive stopping condition yields
superlinear convergence (like the exact proximal BFGS method). The
third condition (stop after 10 iterations) yields only linear
convergence (like a first-order method), and its convergence rate is
affected by the condition number of $\hat{\Theta}$. On the Leukemia
dataset, the condition number is worse and the convergence is slower.


\subsection{Logistic regression}

Suppose we are given samples $x^{(1)},\dots,x^{(m)}$ with labels
$y^{(1)},\dots,y^{(m)}\in\{-1,1\}$. We fit a logit model to our data:
\begin{align}
  \minimize_{w \in \reals^n}\,\frac{1}{m}
    \sum_{i=1}^m \log(1+\exp(-y_i w^Tx_i)) + \lambda\norm{w}_1.
  \label{eq:l1-logistic}
\end{align}
Again, the regularization term $\norm{w}_1$ promotes sparse solutions
and $\lambda$ balances sparsity with goodness-of-fit.

We use two datasets: (i) \texttt{gisette}, a handwritten digits
dataset from the NIPS 2003 feature selection challenge ($n=5000$), and
(ii) \texttt{rcv1}, an archive of categorized news stories from
Reuters ($n=47,000$).\footnote{These datasets are available at
  \url{http://www.csie.ntu.edu.tw/~cjlin/libsvmtools/datasets}.} The
features of \texttt{gisette} have been scaled to be within the
interval $[-1,1]$, and those of \texttt{rcv1} have been scaled to be
unit vectors.  $\lambda$ matched the value reported in
\cite{yuan2012improved}, where it was chosen by five-fold cross
validation on the training set.

We compare a proximal L-BFGS method with SpaRSA and the TFOCS
implementation of FISTA (also Nesterov's 1983 method) on problem
\eqref{eq:l1-logistic}.
We plot relative suboptimality versus function evaluations and time on
the \texttt{gisette} dataset in Figure \ref{fig:gisette} and on the
\texttt{rcv1} dataset in Figure \ref{fig:rcv1}.

\begin{figure}
   \begin{subfigure}{0.5\textwidth}
	\includegraphics[width=\textwidth]{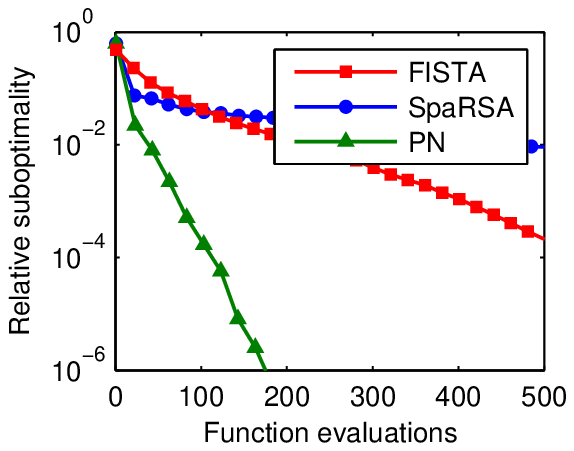}
   \end{subfigure}%
	~
   \begin{subfigure}{0.5\textwidth}
	 \includegraphics[width=\textwidth]{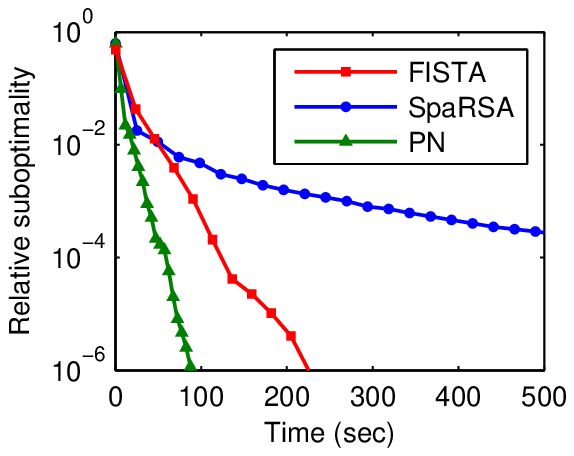}
   \end{subfigure}
   \caption{Logistic regression problem (\texttt{gisette}
     dataset). Proximal L-BFGS method (L = 50) versus FISTA and
     SpaRSA.}
   \label{fig:gisette}
\end{figure}

\begin{figure}
   \begin{subfigure}{0.5\textwidth}
	\includegraphics[width=\textwidth]{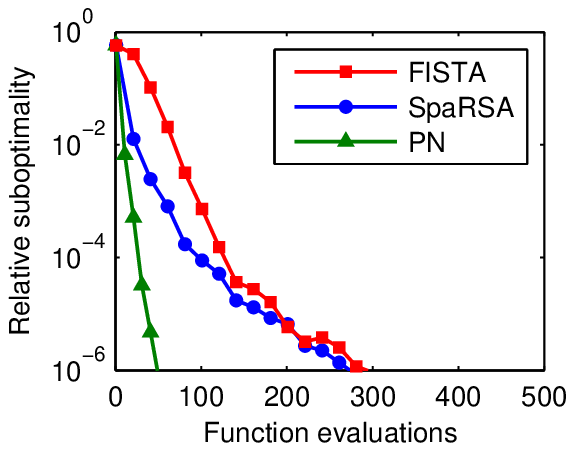}
   \end{subfigure}%
	~
   \begin{subfigure}{0.5\textwidth}
	 \includegraphics[width=\textwidth]{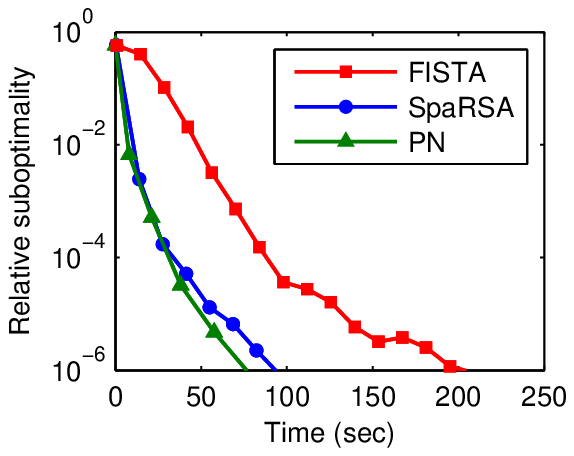}
   \end{subfigure}
   \caption{Logistic regression problem (\texttt{rcv1}
     dataset). Proximal L-BFGS method (L = 50) versus FISTA and
     SpaRSA.}
   \label{fig:rcv1}
\end{figure}

The smooth part of the function requires many expensive exp/log
evaluations. On the dense \texttt{gisette} dataset (30
million nonzero entries in a $6000 \times 5000$ design matrix), evaluating
$g$ dominates the computational cost. The proximal L-BFGS method
clearly outperforms the other methods because the computational
expense is shifted to solving the subproblems, whose objective
functions are cheap to evaluate (see Figure \ref{fig:gisette}). On the
sparse \texttt{rcv1} dataset (40 million nonzero entries in a $542000
\times 47000$ design matrix), the evaluation of $g$ makes up a
smaller portion of the total cost, and the proximal L-BFGS method
barely outperforms SpaRSA (see Figure \ref{fig:rcv1}).

\subsection{Software: PNOPT}

The methods described have been incorporated into a \textsc{Matlab}
package PNOPT (Proximal Newton OPTimizer, pronounced pee-en-opt) and
are publicly available from the Systems Optimization Laboratory (SOL)%
\footnote{\url{http://www.stanford.edu/group/SOL/}}. PNOPT shares an
interface with the software package TFOCS \cite{becker2011templates}
and is compatible with the function generators included with TFOCS. We
refer to the SOL website for details about PNOPT.

\section{Conclusion}

Given the popularity of first-order methods for minimizing composite
functions, there has been a flurry of activity around the development of
Newton-type methods for minimizing composite functions
\cite{hsieh2011sparse, becker2012quasi, olsen2012newton}. We analyze
proximal Newton-type methods for such functions and show that they have
several benefits over first-order methods: 
\begin{enumerate}
\item They converge rapidly near the optimal
  solution, and can produce a solution of high accuracy.
\item They scale well with problem size.
\item The proximal Newton method is insensitive to the choice of
  coordinate system and to the condition number of the level sets of
  the objective.
\end{enumerate}

Proximal Newton-type methods can readily handle composite functions
where $g$ is not convex, although care must be taken to ensure
$\hat{g}_k$ remains strongly convex. The convergence analysis could
be modified to give global convergence (to stationary points) and
convergence rates near stationary points. We defer these extensions to future work.

The main disadvantage of proximal Newton-type methods is the cost of
solving the subproblems.  We have shown that it is possible to reduce
the cost and retain the fast convergence rate by solving the
subproblems inexactly.  We hope our results will kindle further
interest in proximal Newton-type methods as an alternative to
first-order methods and interior point methods for minimizing
composite functions.

\section*{Acknowledgements}

We thank Santiago Akle, Trevor Hastie, Nick Henderson, Qihang Lin,
Xiao Lin, Qiang Liu, Ernest Ryu, Ed Schmerling, Mark Schmidt, Carlos
Sing-Long, Walter Murray, and four anonymous referees for their
insightful comments and suggestions.
J. Lee was supported by a National Defense Science and Engineering
Graduate Fellowship and a
Stanford Graduate Fellowship.
Y. Sun and M. Saunders were partially supported by the Department
of Energy through the Scientific Discovery through Advanced
Computing program under award DE-FG02-09ER25917,
and by the National Institute of General Medical Sciences of
the National Institutes of Health under award U01GM102098.
M.~Saunders was also partially by
the Office of Naval Research under award N00014-11-1-0067.
The content is solely the responsibility of the authors and does not
necessarily represent the official views of the funding agencies.

\appendix

\section{Proof of Lemma \ref{lem:quasinewton-unit-step}}
\label{sec:proofs}

\begin{lemma}
  Suppose (i) $g$ is twice-continuously differentiable and strongly
  convex with constant $m$, and (ii) $\nabla^2 g$ is Lipschitz
  continuous with constant $L_2$.  If the sequence $\{H_k\}$ satisfies
  the Dennis-Mor\'{e} criterion and $mI \preceq H_k \preceq MI$ for
  some $0 < m \le M$, then the unit step length satisfies the
  sufficient descent condition \eqref{eq:sufficient-descent} after
  sufficiently many iterations.
\end{lemma}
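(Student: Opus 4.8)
The plan is to mimic the proof of Lemma~\ref{lem:newton-unit-step} line for line; the only new difficulty is that $H_k$ is no longer $\nabla^2 g(x_k)$, so the curvature term $\Delta x_k^T\nabla^2 g(x_k)\Delta x_k$ that appears after Taylor expansion must be controlled through the Dennis-Mor\'{e} criterion \eqref{eq:dennis-more} rather than directly by \eqref{eq:search-direction-properties-2}.

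First I would carry over the opening of the proof of Lemma~\ref{lem:newton-unit-step} unchanged: Lipschitz continuity of $\nabla^2 g$ gives the cubic upper bound on $g(x_k+\Delta x_k)$, and adding $h(x_k+\Delta x_k)$ to both sides and then adding and subtracting $h(x_k)$ yields
\[
  f(x_k+\Delta x_k)-f(x_k) \le \lambda_k + \tfrac12\Delta x_k^T\nabla^2 g(x_k)\Delta x_k + \tfrac{L_2}{6}\norm{\Delta x_k}^3.
\]

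The key new step is to bound the middle term in terms of $\lambda_k$. I would split $\Delta x_k^T\nabla^2 g(x_k)\Delta x_k = \Delta x_k^TH_k\Delta x_k + \Delta x_k^T(\nabla^2 g(x_k)-H_k)\Delta x_k$, using \eqref{eq:search-direction-properties-2} for the first piece, $\Delta x_k^TH_k\Delta x_k \le -\lambda_k$. For the second piece I would argue that $\norm{(\nabla^2 g(x_k)-H_k)\Delta x_k} \le \e_k\norm{\Delta x_k}$ for a sequence $\e_k\to 0$: since $x_{k+1}-x_k$ is a positive scalar multiple of $\Delta x_k$, the ratio in \eqref{eq:dennis-more} equals $\norm{(H_k-\nabla^2 g(x^\star))\Delta x_k}/\norm{\Delta x_k}$, and combining with $\norm{\nabla^2 g(x_k)-\nabla^2 g(x^\star)} \le L_2\norm{x_k-x^\star}\to 0$ (using $x_k\to x^\star$ from Theorem~\ref{thm:global-convergence}, which applies since $H_k\succeq mI$ and the subproblems are solved exactly) gives the claim. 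Strong convexity then gives $\norm{\Delta x_k}^2 \le \tfrac1m\Delta x_k^TH_k\Delta x_k \le -\tfrac{\lambda_k}{m}$, whence $\Delta x_k^T(\nabla^2 g(x_k)-H_k)\Delta x_k \le -\tfrac{\e_k}{m}\lambda_k$ and $\tfrac{L_2}{6}\norm{\Delta x_k}^3 \le -\tfrac{L_2}{6m}\norm{\Delta x_k}\lambda_k$.

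Substituting these three bounds into the displayed inequality gives
\[
  f(x_k+\Delta x_k)-f(x_k) \le \left(\tfrac12 - \tfrac{\e_k}{2m} - \tfrac{L_2}{6m}\norm{\Delta x_k}\right)\lambda_k.
\]
Because $\Delta x_k\to 0$ (by the argument used to prove Theorem~\ref{thm:global-convergence}) and $\e_k\to 0$, the bracketed quantity exceeds $\alpha$ for all sufficiently large $k$, and since $\lambda_k<0$ this yields $f(x_k+\Delta x_k)-f(x_k)<\alpha\lambda_k$, i.e., the sufficient descent condition \eqref{eq:sufficient-descent} holds at unit step length. I expect the main obstacle to be the bookkeeping that turns the Dennis-Mor\'{e} quantity (stated with $x_{k+1}-x_k$ and $\nabla^2 g(x^\star)$) into the bound $\norm{(\nabla^2 g(x_k)-H_k)\Delta x_k} \le \e_k\norm{\Delta x_k}$ with $\e_k\to0$; everything past that point is the estimate already performed in Lemma~\ref{lem:newton-unit-step}.
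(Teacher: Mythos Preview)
Your proposal is correct and follows essentially the same route as the paper's proof: both start from the cubic Taylor bound, add and subtract $\tfrac12\Delta x_k^TH_k\Delta x_k$, apply \eqref{eq:search-direction-properties-2} to the $H_k$ term, and control $\Delta x_k^T(\nabla^2 g(x_k)-H_k)\Delta x_k$ by splitting through $\nabla^2 g(x^\star)$ and invoking the Dennis--Mor\'e criterion together with Lipschitz continuity of $\nabla^2 g$ and $x_k\to x^\star$. The only cosmetic difference is that you package the cross term as $-\tfrac{\e_k}{2m}\lambda_k$ whereas the paper leaves it as $o(\norm{\Delta x_k}^2)$; the concluding use of $\Delta x_k\to 0$ is identical.
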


\begin{proof}
  Since $\nabla^2 g $ is Lipschitz,
  \begin{equation*}
    g(x+\Delta x)  \le g(x) + \nabla g(x) ^{T} \Delta x
    + \frac{1}{2}\Delta x^T\nabla^2 g(x)\Delta x
    + \frac{L_2}{6}\norm{\Delta x}^3.
  \end{equation*}
  We add $h(x+\Delta x)$ to both sides to obtain
  \begin{align*}
    f(x+\Delta x) &\le g(x) + \nabla g(x) ^{T} \Delta x
                     + \frac{1}{2} \Delta x^T\nabla^2 g(x)\Delta x
  \\              &\pc+\frac{L_2}{6}\norm{\Delta x}^3 + h(x+\Delta x).
  \end{align*}
  We then add and subtract $h(x)$ from the right-hand side to obtain
  \begin{align*}
    f(x+\Delta x) &\le g(x) + h(x)  + \nabla g(x) ^{T} \Delta x+ h(x+\Delta x) - h(x)
  \\	&\pc+ \frac{1}{2} \Delta x^T\nabla^2 g(x)\Delta x +\frac{L_2}{6}\norm{\Delta x}^3
  \\	&\le f(x) +\lambda + \frac{1}{2}\Delta x^T\nabla^2 g(x)\Delta x
                           + \frac{L_2}{6}\norm{\Delta x}^3
  \\	&\le f(x) +\lambda + \frac{1}{2}\Delta x^T\nabla^2 g(x)\Delta x
                           + \frac{L_2}{6m}\norm{\Delta x}\lambda,
  \end{align*}
  where we use \eqref{eq:search-direction-properties-2}. We add and
  subtract $\frac{1}{2}\Delta x^TH\Delta x$ to yield
  \begin{align}
    f(x+\Delta x)
     &\le f(x) + \lambda
               + \frac{1}{2}\Delta x^T\left(\nabla^2 g(x) - H\right)\Delta x
               + \frac{1}{2}\Delta x^TH\Delta x
               + \frac{L_2}{6m}\norm{\Delta x}\lambda \nonumber
  \\ &\le f(x) + \lambda
               + \frac{1}{2}\Delta x^T\left(\nabla^2 g(x) - H\right)\Delta x 
               -\frac{1}{2}\lambda+\frac{L_2}{6m}\norm{\Delta x}\lambda,
	\label{eq:quasi-newton-unit-step-1}
  \end{align}
  where we again use \eqref{eq:search-direction-properties-2}.
  Since $\nabla^2 g$ is Lipschitz continuous and the search
  direction $\Delta x$ satisfies the Dennis-Mor\'{e} criterion,
  \begin{align*}
	&\frac{1}{2}\Delta x^T \left(\nabla^2 g(x)-H \right)\Delta x \\
	&\pc=   \frac{1}{2}\Delta x^T \left(\nabla^2 g(x)-\nabla^2 g(x^\star)\right)\Delta x
		  + \frac{1}{2}\Delta x^T \left(\nabla^2 g(x^\star)-H \right)\Delta x \\
	&\pc\le\frac{1}{2}\norm{\nabla^2 g(x)-\nabla^2 g(x^\star)} \norm{\Delta x}^2
		  + \frac{1}{2}\norm{\left(\nabla^2 g(x^\star)-H \right)\Delta x}\norm{\Delta x} \\
	&\pc\le \frac{L_2}{2}\norm{x-x^\star}\norm{\Delta x}^2 +
	o\big(\norm{\Delta x}^2\big).
  \end{align*}
  We substitute this expression into \eqref{eq:quasi-newton-unit-step-1}
  and rearrange to obtain
  \[
    f(x+\Delta x) \le f(x) + \frac{1}{2}\lambda
	+ o\big(\norm{\Delta x}^2\big) + \frac{L_2}{6m}\norm{\Delta x}\lambda.
  \]
  We can show $\Delta x_k$ converges to zero via the argument used in
  the proof of Theorem \ref{thm:global-convergence}. Hence, for $k$
  sufficiently large, $ f(x_k+\Delta x_k) -f(x_k) \le \frac{1}{2} \lambda_k$.
\end{proof}
%

\frenchspacing
\bibliographystyle{siam}
\bibliography{proxnewton}
\end{document}